\newcommand{\newmethod}{\textit{DeepPipe}}
\newcommand{\dataset}{\mathcal{H}}
\newcommand{\pipeline}{p}
\newcommand{\hp}{\lambda}
\newcommand{\pipeconf}{\pipeline_\hp}
\pgfplotsset{compat=newest}
\def\eqref#1{equation~\ref{#1}}
\def\1{\bm{1}}
\DeclareMathAlphabet{\mathsfit}{\encodingdefault}{\sfdefault}{m}{sl}
\SetMathAlphabet{\mathsfit}{bold}{\encodingdefault}{\sfdefault}{bx}{n}
\newcommand{\R}{\mathbb{R}}
\DeclareMathOperator*{\argmax}{arg\,max}
\DeclareMathOperator*{\argmin}{arg\,min}
\newtheorem{theorem}{Theorem}[section]
\newtheorem{lemma}[theorem]{Lemma}
\newtheorem{proposition}[theorem]{Proposition}
\newtheorem{corollary}[theorem]{Corollary}
\newcommand{\cmark}{\ding{51}}%
\newcommand{\xmark}{\ding{55}}%
  \providecommand\BibTeX{{%
    \normalfont B\kern-0.5em{\scshape i\kern-0.25em b}\kern-0.8em\TeX}}}
\begin{document}

\copyrightyear{2023}
\acmYear{2023}
\setcopyright{acmlicensed}\acmConference[KDD '23]{Proceedings of the 29th ACM SIGKDD Conference on Knowledge Discovery and Data Mining}{August 6--10, 2023}{Long Beach, CA, USA}
\acmBooktitle{Proceedings of the 29th ACM SIGKDD Conference on Knowledge Discovery and Data Mining (KDD '23), August 6--10, 2023, Long Beach, CA, USA}
\acmPrice{15.00}
\acmDOI{10.1145/3580305.3599303}
\acmISBN{979-8-4007-0103-0/23/08}

\title{Deep Pipeline Embeddings for AutoML}



\author{Sebastian Pineda Arango}
\email{pineda@cs.uni-freiburg.de}
\affiliation{%
  \institution{Representation Learning Lab,\\ University of Freiburg}
  \city{Freiburg}
  \country{Germany}
}

\author{Josif Grabocka}
\email{grabocka@cs.uni-freiburg.de}
\affiliation{%
  \institution{Representation Learning Lab,\\ University of Freiburg}
  \city{Freiburg}
  \country{Germany}
}

\begin{abstract}

Automated Machine Learning (AutoML) is a promising direction for democratizing AI by automatically deploying Machine Learning systems with minimal human expertise. The core technical challenge behind AutoML is optimizing the pipelines of Machine Learning systems (e.g. the choice of preprocessing, augmentations, models, optimizers, etc.). Existing Pipeline Optimization techniques fail to explore deep interactions between pipeline stages/components. As a remedy, this paper proposes a novel neural architecture that captures the deep interaction between the components of a Machine Learning pipeline. We propose embedding pipelines into a latent representation through a novel per-component encoder mechanism. To search for optimal pipelines, such pipeline embeddings are used within deep-kernel Gaussian Process surrogates inside a Bayesian Optimization setup. Furthermore, we meta-learn the parameters of the pipeline embedding network using existing evaluations of pipelines on diverse collections of related datasets (a.k.a. meta-datasets). Through extensive experiments on three large-scale meta-datasets, we demonstrate that pipeline embeddings yield state-of-the-art results in Pipeline Optimization. 

\end{abstract}


\keywords{AutoML, Pipeline Optimization, Deep Kernel Gaussian Processes, Meta-learning}



\maketitle

\section{Introduction}

Machine Learning (ML) has proven to be successful in a wide range of tasks such as image classification, natural language processing, and time series forecasting. In a supervised learning setup practitioners need to design a sequence of choices comprising algorithms that transform the data (e.g. imputation, scaling) and produce an estimation (e.g. through a classifier or regressor). Unfortunately, manually configuring the design choices is a tedious and error-prone task. The field of AutoML aims at researching methods for automatically discovering the optimal design choices of ML pipelines \cite{he2021automl,hutter2019automated}. As a result, Pipeline Optimization~\citep{Olson2019_TPOT} or pipeline synthesis~\citep{Liu2020_AnADMM, Drori2021_AlphaD3M} is the primary open challenge of AutoML. 

\begin{figure}
    \centering
    \includegraphics[width=.85\linewidth]{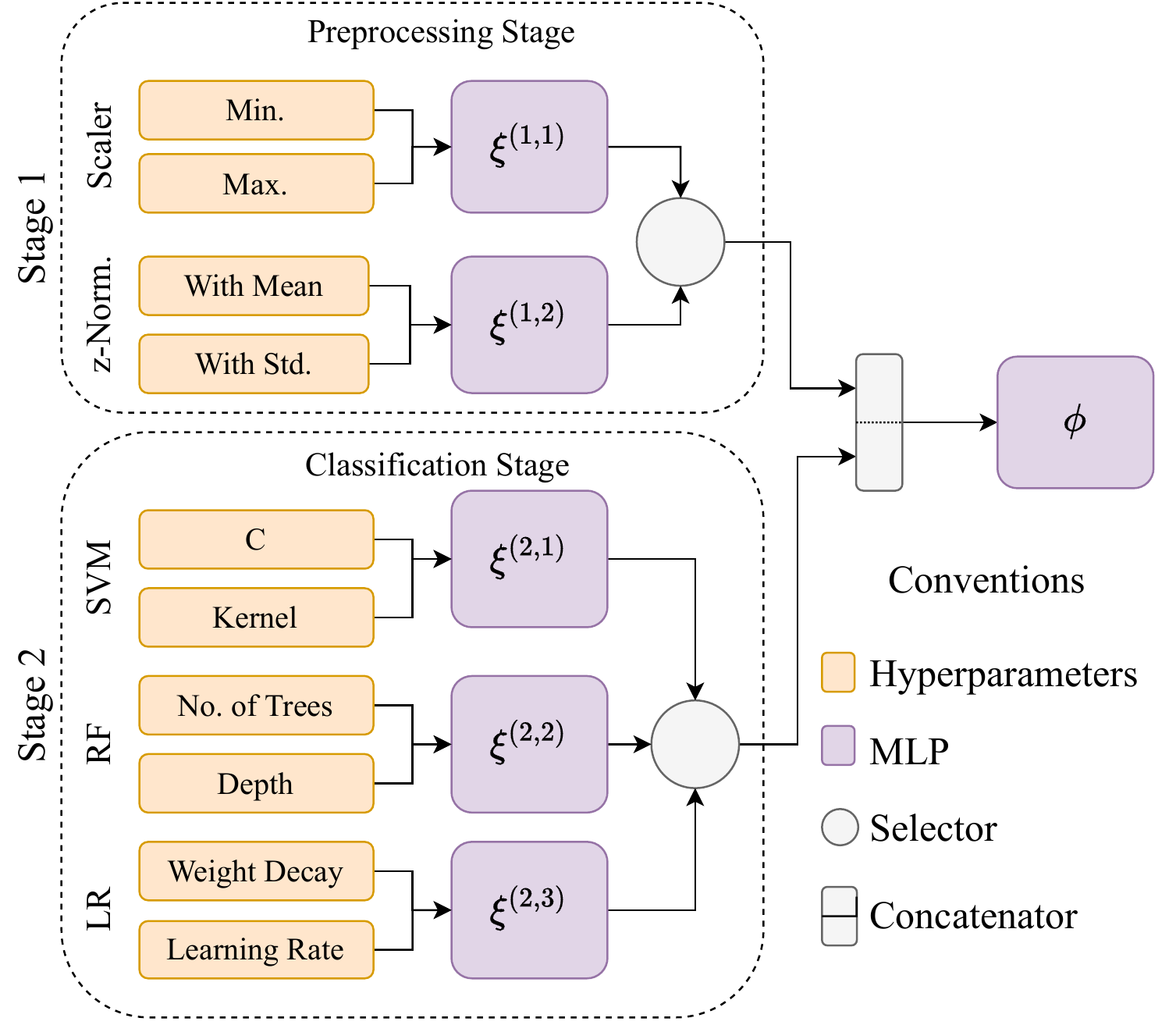}
    \caption{An example architecture for \textit{\newmethod{}} on a search space with two stages $\{\mathrm{Preprocessing}, \mathrm{Classification}\}$.}
    \label{fig:surrogate}
\end{figure}

Pipeline Optimization (PO) techniques need to capture the complex interaction between the algorithms of a Machine Learning (ML) pipeline and their hyperparameter configurations. Previous work demonstrates that the pipeline search can be automatized and achieve state-of-the-art predictive performance~\citep{feurer15_efficient, Olson2019_TPOT}. Some of these approaches include Evolutionary Algorithms~\citep{Olson2019_TPOT}, Reinforcement Learning~\citep{Rakotoarison2019_Automated, Drori2021_AlphaD3M} or Bayesian Optimization~\citep{feurer15_efficient, Thornton2012_AutoWeka, Alaa2018_AutoPrognosis}. Additionally, transfer learning has been shown to improve decisively PO by transferring efficient pipelines evaluated on other similar datasets~\citep{Fusi2018_Probabilistic, Yang2019_Oboe, Yang2020_AutoML}.

Unfortunately, no prior method uses Deep Learning to encapsulate the interaction between pipeline components. Existing techniques train traditional models as performance predictors on the concatenated hyperparameter space of all algorithms, such as Random Forests~\citep{feurer15_efficient}, or Gaussian Processes with additive kernels~\citep{Alaa2018_AutoPrognosis}. In this paper, we hypothesize that \textbf{we need Deep Learning, not only at the basic supervised learning level, but also at a meta-level for capturing the interaction between ML pipeline components} (e.g. the deep interactions of the hyperparameters of preprocessing, augmentation, and modeling stages).

As a result, we introduce \newmethod{}, a neural network architecture for embedding pipeline configurations on a latent space. Such deep representations are combined with Gaussian Processes (GP) for tuning pipelines with Bayesian Optimization (BO). 
We exploit the knowledge of the hierarchical search space of pipelines by mapping the hyperparameters of every algorithm through per-algorithm encoders to a hidden representation, followed by a fully connected network that receives the concatenated representations as input. An illustration of the mechanism is presented in Figure~\ref{fig:surrogate}. Additionally, we show that meta-learning this network through evaluations on auxiliary tasks improves the PO quality. Experiments on three large-scale meta-datasets show that our method achieves new state-of-the-art Pipeline Optimization.

\medskip

Our contributions are as follows:
\begin{itemize}
    \item We introduce \newmethod, a surrogate for BO that achieves state-of-the-art performance when optimizing a pipeline for a new dataset through transfer learning. 
    \item We present a novel and modular architecture that applies different encoders per stage and yields better generalization in low meta-data regimes, i.e. few/no auxiliary tasks.
    \item We conduct extensive evaluations against seven baselines on three large meta-datasets, and we further compare against rival methods in OpenML datasets to assess their performances under time constraints.
    \item We demonstrate that our pipeline representation helps achieve state-of-the-art results in optimizing pipelines for fine-tuning deep computer vision networks.
\end{itemize}

\section{Related Work}

\textbf{Hyperparameter Optimization  (HPO)} has been well studied over the past decade~ \citep{Bergstra2012_Random}. Techniques relying on Bayesian Optimization (BO) employ surrogates to approximate the response function of Machine Learning models, such as Gaussian Processes~\citep{Snoek2012_Practical}, Random Forests~\citep{Bergstra2011_Algorithms} or Bayesian neural networks~\citep{Snoek2015_DNGO, Springenberg2016_BOHAMIANN, wistuba2022supervising}. Further improvements have been achieved by applying transfer learning, where existing evaluations on auxiliary tasks help pre-training or meta-learning the surrogate. In this sense, some approaches use pre-trained neural networks with uncertainty outputs~\citep{Wistuba2021_FSBO, Perrone2018_Scalable,wei2021meta,khazideep2023}, or ensembles of Gaussian Processes~\citep{Feurer2018_RGPE}. 

\textbf{Deep Kernels} propose combining the benefits of stochastic models such as Gaussian Processes with neural networks~\citep{calandra2016manifold, garnelo2018neural, Wilson16_deep}. Follow-up work has applied this combination for training few-shot classifiers~\citep{patacchiola2020bayesian}. In the area of Hyperparameter Optimization, a successful option is to combine the output layer of a deep neural network with a Bayesian linear regression ~\citep{Snoek2015_DNGO}. Related studies \cite{Perrone2018_Scalable} extended this idea by pre-training the Bayesian network with auxiliary tasks. Recent work proposed using non-linear kernels, such as the Matérn kernel, on top of the pre-trained network to improve the performance of BO~\citep{Wistuba2021_FSBO, Wei2021_Meta}. However, to the best of our knowledge, we are the first to apply Deep Kernels for optimizing pipelines.

\textbf{Full Model Selection (FMS)} is also referred to as Combined Algorithm Selection and Hyperparameter optimization (CASH)~\citep{Hutter2019_Automated,feurer15_efficient}. FMS aims to find the best model and its respective hyperparameter configuration~\citep{Hutter2019_Automated}. A common approach is to use Bayesian Optimization with surrogates that can handle conditional hyperparameters, such as Random Forest~\citep{feurer15_efficient}, tree-structured Parzen estimators~\citep{Thornton2012_AutoWeka}, or ensembles of neural networks~\citep{SchillingJoint15}. 

\textbf{Pipeline Optimization (PO)} is a generalization of FMS where the goal is to find the algorithms and their hyperparameters for different stages of a Machine Learning Pipeline. Common approaches model the search space as a tree structure and use reinforcement learning~\citep{Rakotoarison2019_Automated, Drori2021_AlphaD3M, Guimaraes2017_RECIPE}, evolutionary algorithms~\citep{Olson2019_TPOT}, or Hierarchical Task Networks~\citep{Mohr2018_MLPlan} for searching pipelines. Other approaches use Multi-Armed Bandit strategies to optimize the pipeline, and combine them with Bayesian Optimization ~\citep{Swearingen2017_ATM} or multi-fidelity optimization~\citep{kishimoto2021bandit}. \citet{Alaa2018_AutoPrognosis} use additive kernels on a Gaussian Process surrogate to search pipelines with BO that groups the algorithms in clusters and fit their hyperparameters on independent Gaussian Processes, achieving an effectively lower dimensionality per input. By formulating the Pipeline Optimization as a constrained optimization problem, Liu et al~\citep{Liu2020_AnADMM} introduce an approach based on the alternating direction method of multipliers (ADMM)~\citep{gabay1976dual}.

\textbf{Transfer Learning for Pipeline Optimization and CASH} leverages information from previous (auxiliary) task evaluations. A few approaches use dataset meta-features to warm-start BO  with good configurations from other datasets~\cite {feurer15_efficient, Alaa2018_AutoPrognosis}. As extracting meta-features demands computational time, follow-up works find a portfolio based on these auxiliary tasks~\cite {feurer2020auto}. Another popular approach is to use collaborative filtering with a matrix of pipelines vs task evaluations to learn latent embeddings of pipelines. OBOE obtains the embeddings by applying a QR decomposition of the matrix on a time-constrained formulation~\citep{Yang2019_Oboe}. By recasting the matrix as a tensor, Tensor-OBOE~\citep{Yang2020_AutoML} finds latent representations via the Tucker decomposition. Furthermore, \citet{Fusi2018_Probabilistic} apply probabilistic matrix factorization for finding the latent pipeline representations. Subsequently, they use the latent representations as inputs for Gaussian Processes and explore the search space using BO. However, these methods using matrix factorization obtain latent representations of the pipelines that neglect the interactions of the hyperparameters between the pipeline's components.

\section{Preliminaries}

\subsection{Pipeline Optimization}

The pipeline of a ML system consists of a sequence of $N$ stages (e.g. dimensionality reducer, standardizer, encoder, estimator~\citep{Yang2020_AutoML}). At each stage $i \in \{1 \dots  N\}$ a pipeline includes one algorithm\footnote{AutoML systems might select multiple algorithms in a stage, however, our solution trivially generalizes by decomposing stages into new sub-stages with only a subset of algorithms.} from a set of $M_i$ choices (e.g. the \textit{estimator} stage can include the algorithms \{SVM, MLP, RF\}). Algorithms are tuned through their hyperparameter search spaces, where $\hp_{i, j}$ denotes the configuration of the $j$-th algorithm in the $i$-th stage. Furthermore, let us denote a pipeline $\pipeline$ as the set of indices for the selected algorithm at each stage, i.e. $\pipeline := \left(p_1,\dots,p_N \right)$, where $p_i \in \{1\dots M_i\}$ represents the index of the selected algorithm at the $i$-th pipeline stage. The hyperparameter configuration of a pipeline is the unified set of the configurations of all the algorithms in a pipeline, concretely $\hp{\left(\pipeline\right)} := \left( \hp_{1,p_1}, \dots, \hp_{N, p_{N}} \right)$, $\hp_{i, p_i} \in \Lambda_{i, p_i}$. Pipeline Optimization demands finding the optimal pipeline ${\pipeline}^*$ and its optimal configuration $\hp{\left(\pipeline^*\right)}$ by minimizing the validation loss of a trained pipeline on a dataset $\mathcal{D}$ as shown in Equation~\ref{eq:pipelineoptim}.

\begin{equation}
\label{eq:pipelineoptim}
    \left( {\pipeline}^*, {\hp{\left(\pipeline^*\right)}} \right) =
\operatorname*{arg\,min}_{ \substack{\pipeline \in \{1 \dots M_1 \} \times \dots \times \{1 \dots M_N\} , \\ \hp{\left(\pipeline\right)} \in  \Lambda_{1, p_{1}} \times \dots \times \Lambda_{N, p_{N}}  }} \mathcal{L}^{\text{val}}\big( \pipeline, \hp{\left(\pipeline\right)}, \mathcal{D} \big)
\end{equation}

From now we will use the term \textbf{pipeline configuration} for the combination of a sequence of algorithms $p$ and their hyperparameter configurations $\hp{\left(\pipeline\right)}$, and denote it simply as $\pipeconf := \left(p, \hp{\left(\pipeline\right)}\right)$. 

\subsection{Bayesian Optimization}
\label{sec:bopreliminary}
Bayesian optimization (BO) is a mainstream strategy for optimizing ML pipelines \citep{feurer15_efficient, Hutter2011_Sequential, Alaa2018_AutoPrognosis, Fusi2018_Probabilistic, SchillingJoint15}. Let us start with defining a history of $Q$ evaluated pipeline configurations as $\dataset=\{(\pipeconf^{(1)}, y^{(1)}), \dots, \linebreak (\pipeconf^{(Q)}, y^{(Q)})\}$, where $\smash{y^{(q)} \sim\mathcal{N}(f(\pipeconf^{(q)}), \sigma_q^2)}$ is a probabilistic modeling of the validation loss $f(\pipeconf^{(q)})$ achieved with the $q$-th evaluated pipeline configuration $\pipeconf^{(q)}, q \in \{1 \dots Q\}$. Such validation loss is approximated with a surrogate model, typically a Gaussian process (GP) regressor. We measure the similarity between pipelines via a kernel function $k: \text{dom}\left(\pipeconf\right) \times \text{dom}\left(\pipeconf\right) \rightarrow \mathbb{R}_{>0}$ parameterized with $\theta$, and represent similarities as a matrix $K^{\textcolor{blue}{'}}_{q,\ell} := k(\pipeconf^{(q)}, \pipeconf^{(\ell)}; \gamma), K^{\textcolor{blue}{'}} \in \mathbb{R}^{Q \times Q}_{>0}$. Since we consider noise, we define $K=K^{\textcolor{blue}{'}}+\sigma_pI$. A GP estimates the validation loss $f_{*}$ of a new pipeline configuration ${\pipeconf}^{(*)}$ by computing the posterior mean $\mathbb{E}\left[ f_{*}\right]$ and posterior variance $V\left[ f_{*}\right]$ as:

\begin{align}
\label{equation:posterior}
\begin{split}
\mathbb{E}\left[ f_{*} \; | \;{\pipeconf}^{(*)},\mathcal{H} \right] &= K_{*}^{T} K^{-1} y, \\ V\left[ f_{*} \; | \; {\pipeconf}^{(*)},\mathcal{H} \right] &= K_{**} - K_{*}^{T} K^{-1} K_{*}
\\ \text{where:    } K_{*,q} &= k\left({\pipeconf}^{(*)}, \pipeconf^{(q)}; \gamma \right), K_{*} \in \mathbb{R}^{Q}_{>0}
\\ K_{**} &=k\left({\pipeconf}^{(*)}, {\pipeconf}^{(*)}; \gamma \right), K_{**} \in \mathbb{R}_{>0}
\end{split}
\end{align}

We fit a surrogate iteratively using the observed configurations and their response in BO. Posteriorly, its probabilistic output is used to query the next configuration to evaluate by maximizing an acquisition function\citep{Snoek2012_Practical}. A common choice for the acquisition is Expected Improvement, defined as: 

\begin{equation}
\text{EI}(\pipeconf|\mathcal{H})=\mathbb{E}\left[\max\left\{ y_{\text{min} - \mu(\pipeconf)}, 0\right\}\right]
\end{equation}
where $y_{\text{min}}$ is the best-observed response in the history $\mathcal{H}$ and $\mu$ is the posterior of the mean predicted performance given by the surrogate, computed using Equation \ref{equation:posterior}. A common choice as a surrogate is Gaussian Processes, but for Pipeline Optimization we introduce \newmethod{}.

\section{\textit{Deep-Pipe}: BO with Deep Pipeline Configurations}


To apply BO to Pipeline Optimization (PO) we must define a kernel function that computes the similarity of pipeline configurations, i.e. $k\left(\pipeconf^{(q)}, \pipeconf^{(\ell)}; \theta\right) \;= \;?$. Prior work exploring BO for PO use kernel functions directly on the raw concatenated vector space of selected algorithms and their hyperparameters ~\citep{Alaa2018_AutoPrognosis} or use surrogates without dedicated kernels for the conditional search space~\citep{feurer15_efficient, Olson2019_TPOT, SchillingJoint15}.

However, we hypothesize that these approaches cannot capture the deep interaction between pipeline stages, between algorithms inside a stage, between algorithms across stages, and between different configurations of these algorithms. In order to address this issue we propose a simple, yet powerful solution to PO: learn a deep embedding of a pipeline configuration and apply BO with a deep kernel~\citep{Wistuba2021_FSBO, Wilson16_deep}. 

This is done by \newmethod{}, which searches pipelines in a latent space using BO with Gaussian Processes. We use a neural network $\phi(\pipeconf; \theta): \text{dom}(\pipeconf) \rightarrow \mathbb{R}^Z$ with weights $\theta$ to project a pipeline configuration to a $Z$-dimensional space. Then, we measure the pipelines' similarity in this latent space as $k\left( \phi(\pipeconf^{(q)}; \theta), \phi(\pipeconf^{(\ell)}; \theta)\right)$ using the popular Matérn 5/2 kernel \citep{marc2002classes}. Once we compute the parameters of the kernel similarity function, we can obtain the GP's posterior and conduct PO with BO as specified in Section~\ref{sec:bopreliminary}.


In this work, we exploit existing deep kernel learning machinery~\citep{Wistuba2021_FSBO, Wilson16_deep} to train the parameters $\theta$ of the pipeline embedding neural network $\phi$, and the parameters $\gamma$ of the kernel function $k$, by maximizing the log-likelihood of the observed validation losses $y$ of the evaluated pipeline configurations $\pipeconf$. The objective function for training a deep kernel is the log marginal likelihood of the Gaussian Process ~\citep{Rasmussen2006} with covariance matrix entries $k_{q, \ell}=k\left( \phi(\pipeconf^{(q)}; \theta), \phi(\pipeconf^{(\ell)}; \theta)\right)$.


\subsection{Pipeline Embedding Network}
\label{section:pipeline_embedding}

The main piece of the puzzle is: How to define the pipeline configuration embedding $\phi$? 

Our \textit{\newmethod{}} embedding is composed of two parts (i) per-algorithm neural network encoders, and (ii) a pipeline aggregation network. A visualization example of our DeepPipe embedding architecture is provided in Figure~\ref{fig:surrogate}. We define an encoder $\xi^{(i,j)}$ for the hyperparameter configurations of each $j$-th algorithm, in each $i$-th stage, as a plain multi-layer perceptron (MLP). Every encoder, parameterized by weights $\theta_{(i,j)}^{\text{enc}}$, maps the algorithms' configurations to a $L_i$-dimensional vector space:

\begin{align}
\begin{split}
    \vspace{1cm}
    \label{eq:encodersdef}
    \xi^{(i,j)}\left(\hp_{i, j}; \theta_{i,j}^{\text{enc}}\right) &= \text{MLP}\left(\hp_{i, j}; \theta_{i,j}^{\text{enc}}\right), \; \\
    \xi^{(i,j)}&: \Lambda_{i, j} \rightarrow \R^{L_i}
\end{split}
  \end{align}

For a pipeline configuration $\pipeconf$, represented with the indices of its algorithms $p$, and the configuration vectors of its algorithms $\lambda(p)$, we project all the pipeline's algorithms' configurations to their latent space using the algorithm-specific encoders. Then, we concatenate their latent encoder vectors, where our concatenation notation is $\R^{L_i} \oplus \R^{L_k} := \R^{L_i+L_k}$. Finally, the concatenated representation is embedded to a final $\R^Z$ space via an \textit{aggregation} MLP {$\psi$} with parameters $\theta^{\text{aggr}}$ as denoted below:

\begin{equation}
\begin{split}
    \label{eq:aggregationdef}
    \phi\left(p_\lambda \right) &:= {\psi}\left( \xi^{(1,p_1)}(\hp_{1, p_1}) \oplus \dots \oplus \xi^{(N,p_N)}(\hp_{N, p_N}) \; | \; \theta^{\text{aggr}} \right) \; \\
    {\psi} &: \R^{\sum_i L_i} \rightarrow \R^Z
\end{split}
\end{equation}

Within the $i$-th stage, only the output of one encoder is concatenated, therefore the output of the \textit{Selector} corresponds to the active algorithm in the $i$-th stage and can be formalized as $\xi^{(i,p_i)}(\hp_{i, p_i} )= \sum_{j=1}^{M_i} \mathbb{I}(j=p_i) \cdot \xi^{(i,j)}\left(\hp_{i, j}\right)$, where $\mathbb{I}$ denotes the indicator function.
Having defined the embedding $\phi$ in Equations~\ref{eq:encodersdef}-\ref{eq:aggregationdef}, we can plug it into the kernel function, optimize it minimizing the negative log-likelihood of the GP with respect to $\theta=\{\theta^{\text{enc}},\theta^{\text{aggr}}\}$, and conduct BO as in Section~\ref{sec:bopreliminary}. In Appendix \ref{appendix:interactions_discussion}, we discuss further how the different layers allow \newmethod{} to learn the interactions among components and stages. 

\begin{figure*}
    \includegraphics[width=.8\linewidth]{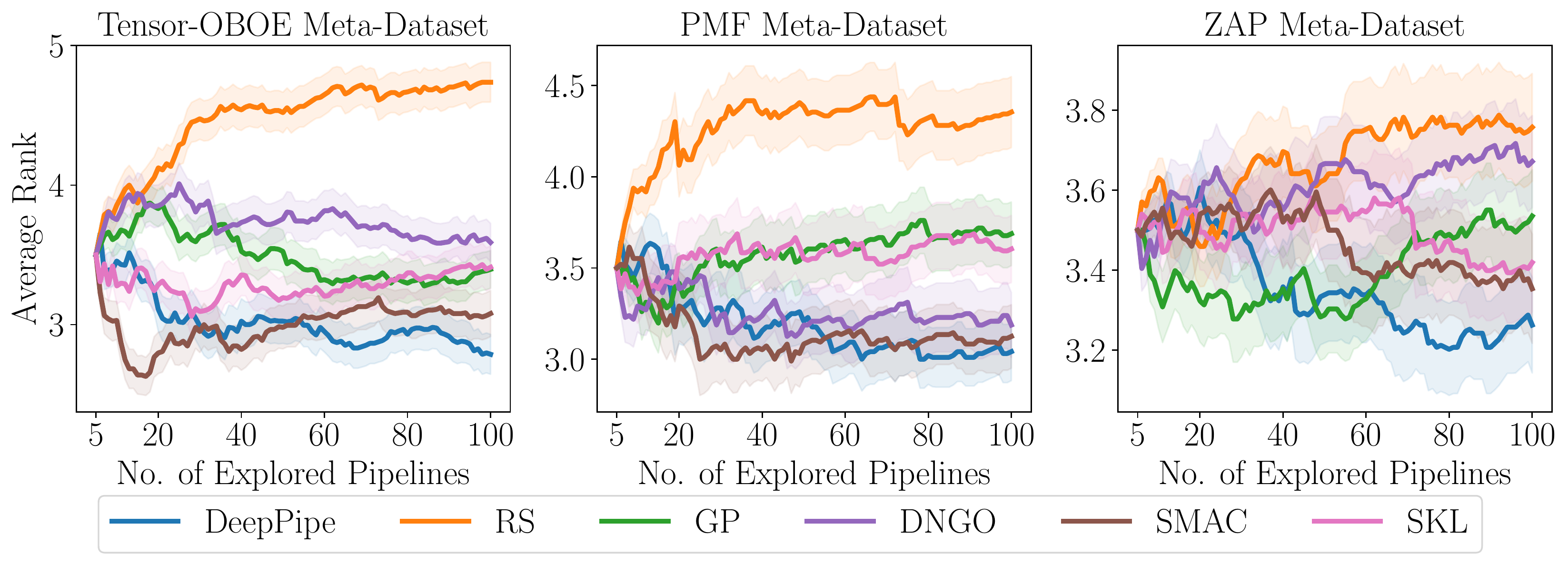}
    \caption{Comparison of \newmethod{} vs. standard PO methods (Experiment 1). Shaded lines indicate 95\% confidence interval.}
    \label{fig:non_transfer_results}
\end{figure*}

\begin{figure*}
    \includegraphics[width=.8\linewidth]{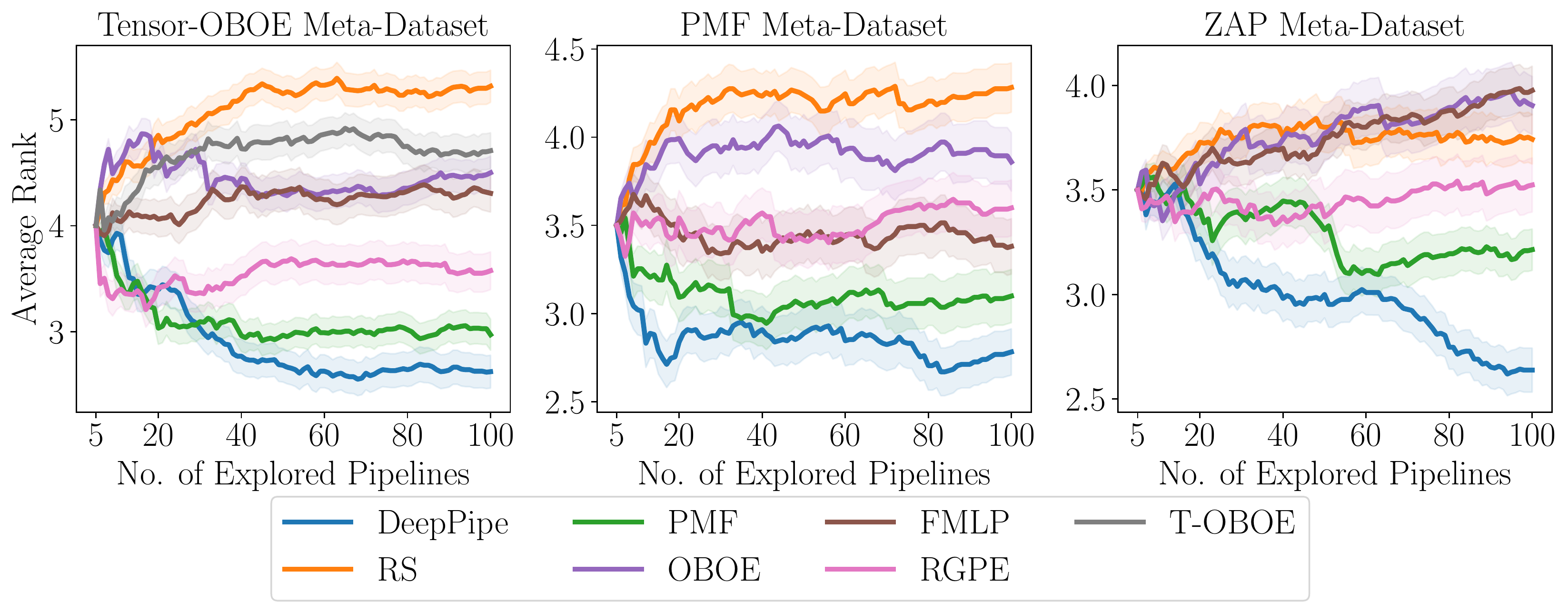}
    \caption{Comparison of \newmethod{} vs. transfer-learning PO methods (Experiment 2). Shaded lines indicate 95\% confidence interval.}
    \label{fig:transfer_results}
\end{figure*}

\subsection{Meta-learning our pipeline embedding}

In many practical applications, there exist computed evaluations of pipeline configurations on previous datasets, leading to the possibility of transfer learning for PO. Our \newmethod{} can be easily meta-learned from such past evaluations by pre-training the pipeline embedding network. Let us denote the meta-dataset of pipeline evaluations on $T$ datasets (a.k.a. auxiliary tasks) as $\dataset_t=\{({\pipeconf}^{(t,1)}, y^{(t,1)}), \dots, ({\pipeconf}^{(t,Q_t)}, y^{(t,Q_t)})\}, \, t \in \left\{1,\dots,T\right\}$, where $Q_t$ is the number of existing evaluations for the $t$-th dataset. As a result, we meta-learn our method's parameters to minimize the meta-learning objective of Equation~\ref{eq:metadeepgpobjective}.  This objective function corresponds to the negative log-likelihood of the Gaussian Processes using \newmethod{}'s extracted features as input to the kernel \citep{Wistuba2021_FSBO, patacchiola2020bayesian}. 

\begin{equation}
    \label{eq:metadeepgpobjective}
    \argmin_{\gamma, \theta} \; \sum_{t=1}^{T} \; {y^{(t)}}^{\mathrm{T}}{K^{(t)}}(\theta, \gamma)^{-1}{y^{(t)}}+\log\left|{K^{(t)}}(\theta, \gamma) \right|
\end{equation}

The learned parameters are used as initialization for the surrogate. We sample batches from the meta-training tasks and make gradient steps that maximize the marginal log-likelihood in Equation \ref{eq:metadeepgpobjective}, similar to previous work~\citep{Wistuba2021_FSBO}. The training algorithm for the surrogate is detailed in Algorithm \ref{alg:surrogate-training}. Every epoch, we perform the following operations for every task $t \in {1...T} $: (\textit{i)} Draw a set of $b$ observations (pipeline configuration and performance), (\textit{ii)} Compute the negative log marginal likelihood (our loss function) as in Equation \ref{eq:metadeepgpobjective},  (\textit{iii)}  compute the gradient of the loss with respect to the \newmethod{} parameters and (\textit{iv)} update \newmethod{} parameters. 
Additionally, we apply Early Convergence by monitoring the performance on the validation meta-dataset.

\begin{algorithm}[ht]
\SetAlgoLined
\KwIn{Learning rate $\eta$, meta-training data with $T$ tasks $\mathcal{H}=\bigcup_{t=1..T}\dataset^{(t)}$, number of epochs $ E$, batch size $b$}
\KwOut{Parameters  $\gamma$ and ${\theta}= \{\theta^{\mathrm{agg}}, \theta^{\mathrm{enc}}\}$}
Initialize $\gamma$ and $\theta$ at random\;
\For{$1,...,E$}
{
    \For{$t \in \{1,...,T\}$}
    {

        Sample batch $\mathcal{B}=\{(p_{\lambda}^{(t,i)},y^{(t,i)})\}_{i=1,\ldots,b}\sim\mathcal{H}^{(t)}$\;
        Compute negative log-likelihood $\mathcal{L}$ on $\mathcal{B}$. (Eq. \ref{eq:metadeepgpobjective})\;
        ${\theta^{\mathrm{agg}}}\leftarrow {\theta^{\mathrm{agg}}} - \eta \nabla_{{\theta^{\mathrm{agg}}}}\mathcal{L}$\;
        ${\theta^{\mathrm{enc}}}\leftarrow {\theta^{\mathrm{enc}}} - \eta \nabla_{{\theta^{\mathrm{enc}}}}\mathcal{L}$\;
        ${\gamma}\leftarrow {{\gamma}} - \eta\nabla_{{\gamma}}\mathcal{L}$\;

    }
}
\caption{\newmethod{} Meta-Training}
\label{alg:surrogate-training}
\end{algorithm}

 When a new pipeline is to be optimized on a new dataset (task), we apply BO (see Algorithm \ref{alg:surrogate-testing}). Every iteration we update the surrogate by fine-tuning the kernel parameters. However, the parameters of the MLP layers $\theta$ can be also optimized, as we did in Experiment 1, in which case the parameters were randomly initialized.  

\begin{algorithm}[t]
\SetAlgoLined
\KwIn{Learning rate $\eta$, initial observations $\mathcal{H} =\{(p_{\lambda}^{(i)},y^{(i)})\}_{i=1,\ldots,I} $, meta-learned surrogate with parameters $\theta$ and $\gamma $, number of surrogate updates $E_{Test}$, BO iterations $E_{BO}$, search space of pipelines $\mathcal{P}$, new task or dataset $\mathcal{D}$}
\KwOut{Pipeline Configuration $p_{\lambda}^*$}

\SetKwFunction{FT}{FineTune}
\SetKwFunction{FBO}{BO}
\SetKwProg{Fn}{Function}{:}{}

\Fn{\FT($\mathcal{H}, \gamma, \eta, E_{Test})$}{
\For{$1,...,E_{Test}$}
{
    Compute negative log-likelihood $\mathcal{L}$ on $\mathcal{D}$. (Objective function in Equation \ref{eq:metadeepgpobjective} with $T=1$)\;
    $\gamma \leftarrow \gamma - \eta\nabla_{\gamma} \mathcal{L}$\;
}
\KwRet{$\gamma$}
}
\Fn{\FBO{$\mathcal{H}, \eta, \theta, \gamma, E_{test}, E_{BO}$}}{

\For{$1,...,E$}
{
    $\gamma^{\prime} \leftarrow \FT( \mathcal{H}, \gamma, \eta, E_{Test}) $\;
    Compute $p^{*}_{\lambda} \in \argmax_{p_{\lambda} \in \mathcal{P}} \text{EI}(p_{\lambda}, \gamma^{\prime}, \theta)$ \;
    Observe performance $y^{*}$ of pipeline $p^{*}_{\lambda}$ \;
    Add new observation $\mathcal{H} \leftarrow \mathcal{H} \cup \{(p^{*}_{\lambda}, y^{*})\}$ \;
}
Compute best pipeline index $i_* \in \argmin_{i \in \{1 \dots |\mathcal{H}|\}} y_i$ \;
\KwRet{$p^{(i_*)}_{\lambda}$ } \;
}
\caption{Bayesian Optimization (BO) with \newmethod{}}
\label{alg:surrogate-testing}
\end{algorithm}

\section{Experiments}

\subsection{Meta-Datasets}
A meta-dataset is a collection of pipeline configurations and their respective performance evaluated in different tasks (i.e. datasets). In our experiments, we use the following meta-datasets.

\textbf{PMF} contains 38151 pipelines (after filtering out pipelines with only NaN entries), and 553 datasets~\citep{Fusi2018_Probabilistic}. Although not all the pipelines were evaluated in all tasks (or datasets), it still has a total of 16M evaluations. The pipeline search space has 2 stages (preprocessing and estimator) with 2 and 11 algorithms respectively.  Following the setup in the original paper~\citep{Fusi2018_Probabilistic}, we take 464 tasks for meta-training and 89 for meta-test. As the authors do not specify a validation meta-dataset, we sample randomly 15 tasks out of the meta-training dataset. 

\textbf{Tensor-OBOE} provides 23424 pipelines evaluated on 551 tasks~\citep{Yang2020_AutoML}. It contains 11M evaluations, as there exist sparse evaluations of pipelines and tasks. The pipelines include 5 stages: Imputator (1 algorithm), Dimensionality-Reducer (3 algorithms), Standardizer (1 algorithm), Encoder (1 algorithm), and Estimator (11 algorithms). We assign 331 tasks for meta-training, 110 for meta-validation, and 110 for meta-testing.  

\textbf{ZAP} is a benchmark that evaluates deep learning pipelines on fine-tuning state-of-the-art computer vision tasks~\citep{Ozturk2022ZeroShotAW}. The meta-dataset contains 275625 evaluated pipeline configurations on 525 datasets and 525 different Deep Learning pipelines (i.e. the best pipeline of a dataset was evaluated also on all other datasets). From the set of datasets, we use 315 for meta-training, 45 for meta-validation and 105 for meta-test, following the protocol of the original paper. 

In addition, we use \textbf{OpenML} datasets. It comprises 39 curated datasets~\citep{gijsbers2019open} and has been used in previous work for benchmarking~\citep{erickson2020autogluon}. Although this collection of datasets does not contain pipeline evaluations like the other three meta-datasets, we use it for evaluating the Pipeline Optimization in time-constrained settings~\citep{Ozturk2022ZeroShotAW}.

Information about the search space of every meta-dataset is clarified in Appendix~\ref{appendix:search_spaces}, and the splits of tasks per meta-dataset are found in Appendix~\ref{appendix:meta_dataset_splits}. All the tasks in the meta-datasets correspond to classification. We use the meta-training set for Pipeline Optimization (PO) methods using transfer learning or meta-learning, and the meta-validation set for tuning some of the hyper-parameters of the PO methods. Finally, we assess their performance on the meta-test set.

\paragraph{Meta-Datasets Preprocessing} We obtained the raw data for the meta-datasets from the raw repositories of PMF~\citep{PMFGit}
, TensorOBOE ~\citep{OboeGit} 
 and ZAP~\citep{ZAPGit}.
 PMF and ZAP repositories provide an accuracy matrix, while Tensor-OBOE specifies the error erorr. Moreover, the pipelines configurations are available in different formats for every meta-dataset, e.g. JSON or YAML. Therefore, we firstly convert all the configurations into a tabular format, and the performance matrices are converted to accuracies. Then, we proceed with the following steps: 1) One-Hot encode the categorical hyperparameters, 2) apply a log transformation $x_{new}=\text{ln}(x)$ to the hyperparameters whose value is greater than 3 standard deviations, 3) scale all the values to be in the range [0,1]. 
\subsection{Baselines}
\label{section:baselines}

We assess the performance of \newmethod{} by comparing it with the following set of baselines, which comprises transfer and non-transfer methods.

\textbf{Random Search (RS)} selects pipeline configurations by sampling randomly from the search space~\citep{Bergstra2012_Random}. 

\textbf{Probabilistic Matrix Factorization (PMF)} uses a surrogate model that learns shallow latent representation for every pipeline using the performance matrix of meta-training tasks~\citep{Fusi2018_Probabilistic}. We follow the setting for the original PMF for AutoML implementation~\citep{PMFGit}.

\textbf{OBOE} also uses matrix factorization for optimizing pipelines, but they aim to find fast and informative algorithms to initialize the matrix~\citep{Yang2019_Oboe}. We use the settings provided by the authors.

\textbf{Tensor-OBOE} formulates PO as a tensor factorization, where the rank of the tensor is equal to $1+N$, for $N$ being the number of stages in the pipeline~\citep{Yang2020_AutoML}. We use the setting provided by the original implementation \citep{Yang2019_Oboe}. We do not evaluate TensorOBOE on the ZAP and PMF meta-datasets because their performance matrix do not factorize into a tensor.

\textbf{Factorized Multilayer Perceptron (FMLP)} creates an ensemble of neural networks with a factorized layer~\citep{SchillingJoint15}. The inputs of the neural network are the one-hot encodings of the algorithms and datasets, in addition to the algorithms' hyperparameters. We use 100 networks with 5 neurons and ReLU activations as highlighted in the author's paper~\citep{SchillingJoint15}. 

\textbf{RGPE} builds an ensemble of Gaussian Processes using auxiliary tasks~\citep{Feurer2018_RGPE}. The ensemble weights the contributions of every base model and the new model fits the new task. We used the implementation from Botorch~\citep{Balandat2020_BoTorch}.

\textbf{Gaussian Processes (GP)} are a standard and strong baseline in hyperparameter optimization~\citep{Snoek2012_Practical}. In our experiments, we used Matérn 5/2 kernel.  

\textbf{DNGO} uses neural networks as basis functions with a Bayesian linear regressor at the output layer~\citep{Snoek2015_DNGO}. We use the implementation provided by \citet{Klein2020_pybnn}, and its default hyperparameters. 

\textbf{SMAC} uses Random Forest with 100 trees for predicting uncertainties~\citep{Hutter2011_Sequential}, with minimal samples leaf and split equal to 3. They have proven to handle well conditional search spaces ~\citep{feurer15_efficient}. 

\textbf{AutoPrognosis}~\citep{Alaa2018_AutoPrognosis} uses Structured Kernel Learning (SKL) and meta-learning for optimizing pipelines. We also compare AutoPrognosis against the meta-learned \newmethod{} by limiting the search space of classifiers to match the classifiers on the Tensor-OBOE meta-dataset \footnote{Specifically, the list of classifiers is: Random Forest, Extra Tree Classifier, Gradient Boosting", Logist Regression, MLP, linear SVM, kNN, Decision Trees, Adaboost, Bernoulli Naive Bayes, Gaussian Naive Bayes, Perceptron.}. Additionally, we compare \textbf{SKL} with our non-meta-learned \newmethod{} version using the default strategy for searching the additive kernels. For these experiments, we use the implementation in the respective author's repository \cite{AutoPrognosisGit}. 

\textbf{TPOT} is an AutoML system that conducts PO using evolutionary search~\citep{Olson2019_TPOT}. We use the original implementation but adopted the search space to fit the Tensor-OBOE meta-dataset (see Appendix \ref{appendix:search_spaces}). 

\begin{table}[]
\centering
\caption{Average Rank and Number of Observed Pipelines (\# Pips.) on OpenML Datasets after Experiment 3.}\label{table:time_constrained_experiment}
\begin{tabular}{c cccc}
\toprule
\multirow{2}{*}{\textbf{Method}}                                 & \multicolumn{2}{c}{\textbf{10 Mins.}}                               & \multicolumn{2}{c}{\textbf{1 Hour}} \\\cmidrule(ll){2-3} \cmidrule(ll){4-5}
                                                        & \multicolumn{1}{c}{\textbf{ Rank}}  & \textbf{\# Pips.}  & \multicolumn{1}{c}{\textbf{Rank}} & \textbf{\# Pips.} \\ \hline
TPOT                                            & \multicolumn{1}{c}{3.20 $\pm$ 0.19}   & 45 $\pm$ 46   & \multicolumn{1}{c}{3.35 $\pm$ 0.19}  & 70 $\pm$ 41  \\ 
T-OBOE                                           & \multicolumn{1}{c}{4.38 $\pm$ 0.17} & 84 $\pm$ 57 & \multicolumn{1}{c}{4.36 $\pm$ 0.20} & 178 $\pm$ 69 \\ 
OBOE                                           & \multicolumn{1}{c}{3.99$\pm$ 0.19} & 120 $\pm$ 70 & \multicolumn{1}{c}{4.08 $\pm$ 0.21} & 467 $\pm$ 330 \\ 
SMAC                                           & \multicolumn{1}{c}{3.24$\pm$ 0.16} & 81 $\pm$ 115 & \multicolumn{1}{c}{3.16 $\pm$ 0.14} & 452 $\pm$ 637 \\ 
PMF                                           & \multicolumn{1}{c}{3.04$\pm$ 0.15} & 126 $\pm$ 197 & \multicolumn{1}{c}{2.93 $\pm$ 0.15} & 523 $\pm$ 663 \\ 
\newmethod{}                                          & \multicolumn{1}{c}{\textbf{2.74 $\pm$ 0.12}}  & 94 $\pm$ 128 & \multicolumn{1}{c}{\textbf{2.89 $\pm$ 0.13}} & 356 $\pm$ 379 \\ \bottomrule
\end{tabular}
\end{table}

\begin{table}[]
\caption{{Comparison AutoPrognosis (AP) vs  \newmethod{} (DP)} }\label{tab:comparison_with_autoprognosis}
\begin{tabular}{ccccc}

\multicolumn{1}{l}{}    $\mathbf{E_{BO}}$      &       \textbf{Alg.}                     & \textbf{Rank} & \textbf{ Acc.} & \textbf{ Time (Min.)} \\ \toprule
\multirow{2}{*}{\textbf{50}}  & \textbf{AP} & 1.558   $\pm$ 0.441             & 0.863   $\pm$ 0.114             & 161    $\pm$ 105             \\
                              & \textbf{DP}          & \textbf{ 1.441   $\pm$ 0.441 }            &  \textbf{0.869   $\pm$ 0.111 }             & 15   $\pm$ 25              \\ \midrule
\multirow{2}{*}{\textbf{100}} & \textbf{AP} & 1.513           $\pm$ 0.469             & 0.871  $\pm$ 0.095             & 308    $\pm$ 186             \\
                              & \textbf{DP}          & \textbf{ 1.486   $\pm$ 0.469 }             &  \textbf{0.873  $\pm$ 0.097}             & 37    $\pm$ 90  \\  \bottomrule 
                              
\end{tabular}
\end{table}

\begin{table*}[t]\centering
\caption{Average rank among \newmethod{} variants for newly-added algorithms (Tensor-OBOE)}\label{tab:omit_estimator_oboe }
\begin{tabular}{ccccccccccccccc}\toprule
\multirow{2}{*}{Enc.} &\multirow{2}{*}{MTd.} &\multicolumn{2}{c}{Omitted in}
&\multicolumn{10}{c}{Omitted Estimator} \\\cmidrule(ll){3-4} \cmidrule(ll){5-14}
& &MTr. &MTe. &ET &GBT &Logit &MLP &RF &lSVM &KNN &DT &AB &GB/PE \\\midrule
\cmark  & \cmark  & \cmark  & \cmark &3.2398 &3.1572 &3.0503 &3.1982 &3.4135 &3.3589 &3.2646 &3.2863 &3.1580 &3.3117 \\
\cmark  & \xmark & \cmark  &  \xmark&3.5319 &3.0934 &3.6362 &3.4780 &3.4712 &3.3829 &3.6312 &3.3691 &3.6333 &3.4642 \\
\cmark  &\cmark  & \xmark & \xmark &\textbf{2.5582} &\textbf{2.6773} & \textbf{2.7086} & \textbf{2.5761} & \textbf{2.6485} &\textbf{2.6938} & \textbf{2.6812} & \textbf{2.5596} & \textbf{2.5936} & \textbf{2.5546} \\
\xmark &\cmark &\cmark & \xmark &2.9247 &3.0743 & 2.8802 &3.0423 & \underline{2.6691} &2.8026 &2.7408 &   2.9161  &2.9214 &2.8689 \\
\cmark & \cmark & \cmark & \xmark &\underline{2.7455} & \underline{2.9978} &\underline{2.7248} &\underline{2.7054} & 2.7978 & \underline{2.7619} & \underline{2.6822} & \underline{2.8688} &\underline{2.6938} & \underline{2.8007} \\
\bottomrule
\end{tabular}
\end{table*}

\subsection{Experimental Setup for \newmethod{} }
\label{section:experimental_setup}

The encoders and the aggregation layers are Multilayer Perceptrons with ReLU activations. We keep an architecture that is proportional to the input size, such that the number of neurons in the hidden layers for the encoder of algorithm $j$-th in $i$-th stage with $|\Lambda_{i,j}|$ hyperparameters is $F\cdot|\Lambda_{i,j}|$, given an integer factor $F$. The output dimension of the encoders of the $i$-th stage is defined as $L_{i} = \mathrm{max}_j|\Lambda_{i,j}|$. The number of total layers (i.e. encoder and aggregation layers) is fixed to 4 in all experiments, thus the number of encoders $\ell_e$ is chosen from \{0,1,2\} and the number of aggregation layers is set to $4-\ell_e$. The specific values of the encoders' input dimensions are detailed in Appendix~\ref{appendix:search_spaces}. We choose $F \in \{4,6,8,10\}$ based on the performance in the validation split. Accordingly, we use the following values for \newmethod{}: \textit{(i)} in Experiment 1: 1 encoder layer (all meta-datasets), $F=6$ (PMF and ZAP) and $F=8$ (Tensor-OBOE), \textit{(ii)} in Experiment 2: $F=8$, no encoder layer (PMF, Tensor-OBOE) and one encoder layer (ZAP), \textit{(iii)} in Experiment 3: $F=8$ and no encoder layers, \textit{(iv)} in Experiment 4 we use $F=8$ and $\{0,1\}$ encoder layers. Finally \textit{(iv)} in Experiment 5 we use  $F=8$ and $\{0,1,2\}$ encoder layers. 

In all experiments (except Experiment 1), we meta-train the surrogate following Algorithm \ref{alg:surrogate-training} for 10000 epochs with the Adam optimizer and a learning rate of $10^{-4}$, batch size $1000$, and the Matérn kernel for the Gaussian Process. During meta-testing, when we perform BO to search for a pipeline, we fine-tune only the kernel parameters $\gamma$ for 100 gradient steps. In the non-transfer experiments (Experiment 1) we tuned the whole network for $10000$ iterations, while the rest of the training settings are similar to the transfer experiments. In Experiment 5 we fine-tune the whole network for 100 steps when no encoders are used. Otherwise, we fine-tune only the encoder associated with the omitted estimator and freeze the rest of the network. We ran all experiments on a CPU cluster, where each node contains two Intel Xeon E5-2630v4 CPUs with 20 CPU cores each, running at 2.2 GHz. We reserved a total maximum memory of 16GB.  
Further details on the architectures for each search space are specified in Appendix~\ref{appendix:architecture_details}. Finally, we use the Expected Improvement as an acquisition function for \newmethod{} and all the baselines.

\paragraph{Initial Configurations}
All the baselines use the same five initial configurations, i.e. $I=5$ in Algorithm \ref{alg:surrogate-testing}. For the experiments with the PMF-Dataset, we choose these configurations with the same procedure as the authors~\cite{Fusi2018_Probabilistic}, where they use dataset meta-features to find the most similar auxiliary task to initialize the search on the test task. Since we do not have meta-features for the Tensor-OBOE meta-dataset, we follow a greedy initialization approach~\cite{Metz2020_Thousand}. This was also applied to the ZAP-Dataset. Specifically, we select the best-performing pipeline configuration by ranking their performances on the meta-training tasks. Subsequently, we iteratively choose four additional configurations that minimize $\sum_{t \in \mathrm{Tasks}} \hat{r}_{t}$, where $\hat{r}_t=\min_{\pipeline \in \mathcal{X}} r_{t,\pipeline}$, given that $r_{t,\pipeline}$ is the rank of the pipeline $\pipeline$ on task $t$.
Additional details on the setup can be found in  our source code\footnote{The code is available in this repository: \url{https://github.com/releaunifreiburg/DeepPipe}}.

\subsection{Research Hypotheses and Associated Experiments}
\label{section:hypothesis}

We describe the different hypotheses and experiments for testing the performance of \newmethod{}.

\textbf{Hypothesis 1:} \newmethod{} outperforms standard PO baselines.

\textbf{Experiment 1:} We evaluate the performance of \newmethod{} when no meta-training data is available. We compare against four baselines: Random Search (RS)~\citep{Bergstra2011_Algorithms}, Gaussian Processes (GP)~\citep{Rasmussen2006}, DNGO~\citep{Snoek2015_DNGO}, SMAC~\citep{Hutter2011_Sequential} and SKL~\citep{Alaa2018_AutoPrognosis}. We evaluate their performances on the aforementioned PMF, Tensor-OBOE and ZAP meta-datasets. In Experiments 1 and 2 (below), we select 5 initial observations to warm-start the BO, then we run 95 additional iterations. 

\textbf{Hypothesis 2:} Our meta-learned \newmethod{} outperforms state-of-the-art transfer-learning PO methods.

\textbf{Experiment 2:} We compare our proposed method against baselines that use auxiliary tasks (a.k.a. meta-training data) for improving the performance of Pipeline Optimization: Probabilistic Matrix Factorization (PMF) \citep{Fusi2018_Probabilistic}, Factorized Multilayer Perceptron (FMLP) \citep{SchillingJoint15}, OBOE \citep{Yang2019_Oboe} and Tensor OBOE \citep{Yang2020_AutoML}. 
Moreover, we compare to RGPE \citep{Feurer2018_RGPE}, an effective baseline for transfer HPO~\citep{Pineda2021_HPOB}. We evaluate the performances on the PMF and Tensor-OBOE meta-datasets.

\textbf{Hypothesis 3:} \newmethod{} leads to strong any-time results in a time-constrained PO problem.

\textbf{Experiment 3:} Oftentimes practitioners need AutoML systems that discover efficient pipelines within a small time budget. To test the convergence speed of our PO method we ran it on the aforementioned OpenML datasets for a budget of 10 minutes, and also 1 hour. We compare against five baselines: \textit{(i)} TPOT~\citep{Olson2019_TPOT} adapted to the search space of Tensor-OBOE (see Appendix~\ref{appendix:search_spaces}), \textit{(ii)} OBOE and Tensor-OBOE \citep{Yang2019_Oboe,Yang2020_AutoML} using the time-constrained version provided by the authors, \textit{(iii)} SMAC \citep{Hutter2011_Sequential}, and \textit{(iv)} PMF \citep{Fusi2018_Probabilistic}. The last three had the same five initial configurations used to warm-start BO as detailed in Experiment 1. Moreover, they were pre-trained with the Tensor-OBOE meta-dataset and all the method-specific settings are the same as in Experiment 2. We also compared \newmethod{} execution time with AutoPrognosis~\citep{imrie2022autoprognosis}, and report the performances after 50 and 100 BO iterations. 

\textbf{Hypothesis 4:} Our novel encoder layers of \newmethod{} enable an efficient PO when the pipeline search space changes, i.e. when developers add a new algorithm to an ML system.

\textbf{Experiment 4:} 
A major obstacle to meta-learning PO solutions is that they do not generalize when the search space changes, especially when the developers of ML systems add new algorithms. Our architecture quickly adapts to newly added algorithms \textbf{because only an encoder sub-network for the new algorithm should be trained}. To test the scenario, we ablate the performance of five versions of \newmethod{} and try different settings when we remove a specific algorithm (an estimator) either from meta-training, meta-testing, or both.

\textbf{Hypothesis 5:} The encoders in \newmethod{} introduce an inductive bias where latent representation vectors of an algorithm's configurations are co-located and located distantly from the representations of other algorithms' configurations. Formally, given three pipelines $\pipeline^{(l)}, \pipeline^{(m)}, \pipeline^{(n)}  $ if $p_i^{(l)} =p_i^{(m)} , p_i^{(l)} \neq p_i^{(n)} $ then $||\phi(p^{(l)})-\phi(p^{(m)})|| < ||\phi(p^{(m)})-\phi(p^{(n)})||$ with higher probability when using encoder layers, given that $p_i^{(n)}$ is the index of the algorithm in the $i$-th stage. Furthermore, we hypothesize that the less number of tasks during pre-training, the more necessary this inductive bias is.

\textbf{Experiment 5:} We sample 2000 pipelines of 5 estimation algorithms on the TensorOBOE dataset. Subsequently, we embed the pipelines using a \newmethod{} with 0, 1, and 2 encoder layers, and weights $\theta$, initialized such that $\theta_i \in \theta$ are independently identically distributed $\theta_i \sim \mathcal{N}(0, 1)$. Finally, we visualize the embeddings with T-SNE~\citep{van2008visualizing} and compute a cluster metric to assess how close pipelines with the same algorithm are in the latent space: $\mathbb{E}_{\pipeline^{(l)}, \pipeline^{(m)}, \pipeline^{(n)}}(\mathbb{I}(||\phi(p^{(l)})-\phi(p^{(m)})|| < ||\phi(p^{(m)})-\phi(p^{(n)})||)).$ To test the importance of the inductive bias vs the number of pre-training tasks, we ablate the performance of \newmethod{} for different percentages of pre-training tasks (0.5\%, 1\%, 5\%, 10\%, 50\%, 100\%) under different values of encoder layers.

    

\section{Results}

\label{section:results}

\begin{figure*}
     \centering
    \includegraphics[width=0.85\linewidth]{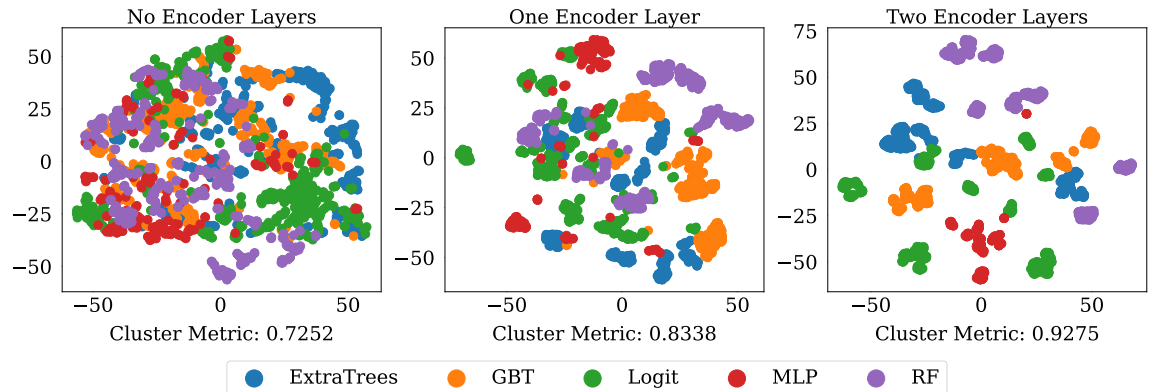}
    \caption{Embeddings of Pipelines produced by a random initialized \newmethod{} (after applying T-SNE). The color indicates the active algorithm in the Estimation stage of Tensor-OBOE Meta-Dataset.}
    \label{fig:random_embeddings}
\end{figure*}

We present the results for \textbf{Experiments 1 and 2} in Figures \ref{fig:non_transfer_results} and \ref{fig:transfer_results}, respectively. In both cases, we compute the ranks of the classification accuracy achieved by the discovered pipelines of each technique, averaged across the meta-testing datasets. The shadowed lines correspond to the 95\% confidence intervals. Additional results showing the mean regret are included in Appendix~\ref{appendix:additional_results}.  In Experiment 1 (standard/non-transfer PO) \newmethod{} achieved the best performance for both meta-datasets, whereas SMAC attained the second place.

In \textbf{Experiment 2} \newmethod{} strongly outperforms all the transfer-learning PO baselines in all meta-datasets. Given that \newmethod{} yields state-of-the-art PO results on both standard and transfer-learning setups, we conclude that our pipeline embedding network computes efficient representations for PO with Bayesian Optimization. In particular, the results on the ZAP meta-dataset indicate the efficiency of \newmethod{} in discovering state-of-the-art Deep Learning pipelines for computer vision. We discuss additional ablations and comparisons in Appendix \ref{appendix:additional_results}.

\textbf{Experiment 3} conducted on the OpenML datasets shows that \newmethod{} performs well under restricted budgets, as reported in Table \ref{table:time_constrained_experiment}. We present the values for the average rank and the average number of observed pipelines after 10 and 60 minutes. Additionally, Table \ref{tab:comparison_with_autoprognosis} shows the number of pipelines observed by AutoPrognosis and \newmethod{} during the execution, demonstrating that \newmethod{} manages to explore a relatively high number of pipelines while attaining the best performance. Although our method does not incorporate any direct way to handle time constraints, it outperforms other methods that include heuristics for handling a quick convergence, such as OBOE and Tensor-OBOE.


Additionally, we compare \newmethod{} with the AutoPrognosis 2.0 library~\citep{imrie2022autoprognosis} on the Open ML datasets, where we run both methods for 50 and 100 BO iterations ($E_{BO}$). We report the average and standard deviation for rank, accuracy, and time. \newmethod{} achieves the best average rank, i.e. a lower average rank than AutoPrognosis. This is complemented by having the highest average accuracy. Interestingly, our method is approximately one order of magnitude faster than AutoPrognosis. We note this is due to the time overhead introduced by their Gibbs sampling strategy for optimizing the structured kernel, whereas our approach uses gradient-based optimization.

Furthermore, the results reported in Tables \ref{tab:omit_estimator_oboe } and \ref{tab:omit_estimator_pmf } for \textbf{Experiment 4} indicate that our \newmethod{} embedding quickly adapts to incrementally-expanding search spaces, e.g. when the developers of an ML system add new algorithms. In this circumstance, existing transfer-learning PO baselines do not adapt easily, because they assume a static pipeline search space. As a remedy, we propose that when a new algorithm is added to the system after meta-training, we train only a new encoder from scratch (randomly initialized) for that new algorithm. Additionally, the meta-learned parameters for the other encoders and the aggregation layer are frozen. In this experiment, we run our method on variants of the search space when one algorithm at a time is introduced to the search space (for instance an estimator, e.g. MLP, RF, etc., is not known during meta-training, but added new to the meta-testing).

In Tables \ref{tab:omit_estimator_oboe } and \ref{tab:omit_estimator_pmf } (in Appendix), we report the results in Experiment 4 by providing the values of the average rank among five different configurations for \newmethod{}. We compare among meta-trained versions (denoted by \cmark in the column \textit{MTd.}) that omit specific estimators during meta-training (\textit{MTr.}=\cmark), or during meta-testing (\textit{MTe.}=\cmark). We also account for versions with one encoder layer denoted by \cmark in the column \textit{Enc}.

The best in all cases is the meta-learned model that did not omit the estimator (i.e. algorithm known and prior evaluations with that algorithm exist). Among the versions that omitted the estimator in the meta-training set (i.e. algorithm added new), the best configuration was the \newmethod{} which fine-tuned a new encoder for that algorithm (line \textit{Enc}=\cmark, \textit{MTd.}=\cmark, \textit{MTr.}=\cmark, \textit{MTe.}=\xmark). This version of \newmethod{} performs better than ablations with no encoder layers (i.e. only aggregation layers $\phi$), or the one omitting the algorithm during meta-testing (i.e. pipelines that do not use the new algorithm at all). The message of the results is simple: If we add a new algorithm to an ML system, instead of running PO without meta-learning (because the search space changes and existing transfer PO baselines are not applicable to the new space), we can use a meta-learned \newmethod{} and only fine-tune an encoder for a new algorithm.

\subsection{On the Inductive Bias and Meta-Learning}
\label{subsection:inductive_bias_and_pretraining}

\begin{figure} [bpt]
    \centering
    \vspace{-0.3cm}
    \includegraphics[width=.9\linewidth]{}
    \caption{Comparison of the average rank for \newmethod{} with a different number of encoders under different percentages of meta-train data. The total number of layers is always the same.}
    \vspace{-0.3cm}
    \label{fig:ablation_encoders_train_size}
\end{figure}

The results of \textbf{Experiment 5} on effect of the inductive bias introduced by the encoders are presented in Figure \ref{fig:random_embeddings}. The pipelines with the same active algorithm in the estimation stage, but with different hyperparameters, lie closer in the embedding space created by a random initialized \newmethod{}, forming compact clusters characterized by the defined cluster metric (value below the plots). We formally demonstrate in Appendix \ref{appendix:theoretical_insight} that, in general, a single encoder layer is creating more compact clusters than a fully connected linear layer.

In another experiment, we assess the performance of \newmethod{} with different network sizes and meta-trained with different percentages of meta-training tasks: 0.5\%, 1\%, 5\%, 10\%, 50\%, and 100\%. As we use the Tensor-OBOE meta-dataset, this effectively means that we use 1, 3, 16, 33, 165, and 330 tasks respectively. We ran the experiment for three values of $F$. The presented scores are the average ranks among the three \newmethod{} configurations (row-wise). The average rank is computed across all the meta-test tasks and across 100 BO iterations. 

The results reported in Figure \ref{fig:ablation_encoders_train_size} indicate that deeper encoders achieve a better performance when a small number of meta-training tasks is available. In contrast, shallower encoders are needed if more meta-training tasks are available.  Apparently the deep aggregation layers $\phi$ already capture the interaction between the hyperparameter configurations across algorithms when a large meta-dataset of evaluated pipelines is given. The smaller the meta-data of evaluated pipeline configurations, the more inductive bias we need to implant in the form of per-algorithm encoders.

\subsection{Visualizing the learned embeddings}

\label{appendix:learnt_representations}
\begin{figure}
    \includegraphics[width=.99\linewidth]{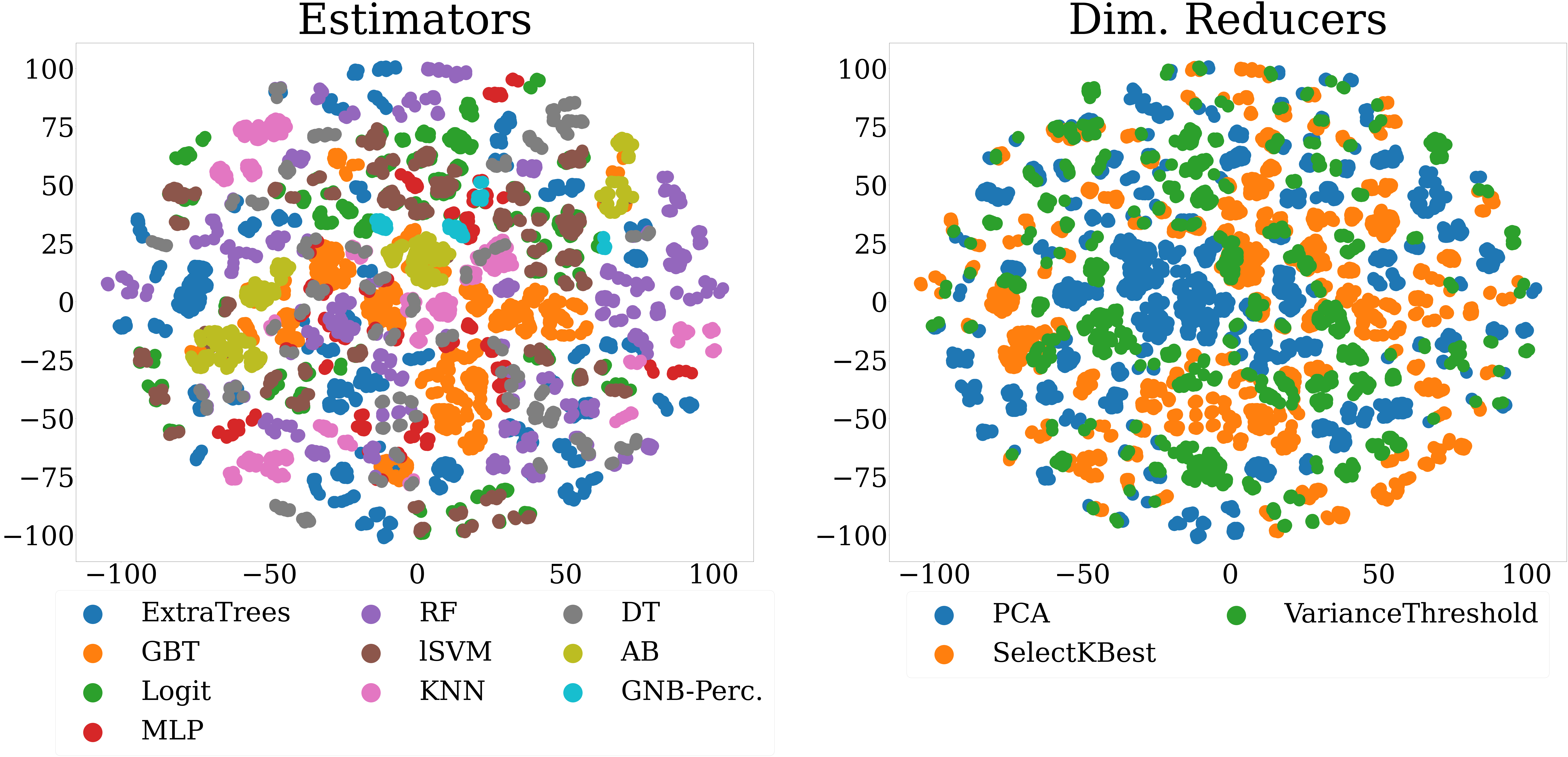}
    \caption{Pipeline embeddings produced by a meta-learned \newmethod{} using Tensor-OBOE meta-dataset. We define color markers for estimators (left) and dimensionality reducers (right).}
    \label{fig:learnt_represetations_algorithms}
\end{figure}

\begin{figure}
    \includegraphics[width=.99\linewidth]{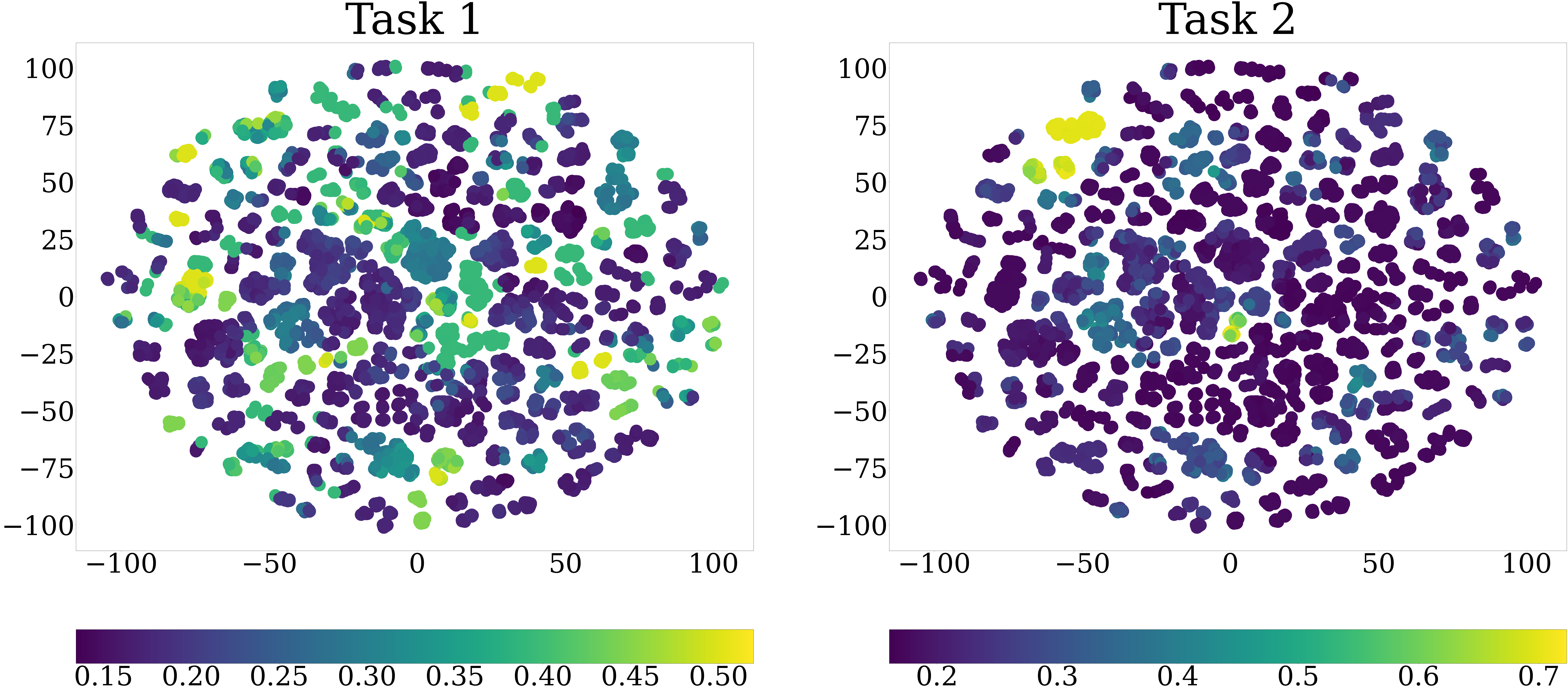}
    \caption{Pipeline embeddings produced by a meta-learned \newmethod{} using Tensor-OBOE meta-dataset. The color indicates the accuracy level of every pipeline on two different meta-testing tasks.}
    \label{fig:learnt_representations_tasks}
\end{figure}

We are interested in visualizing how the pipeline's representations cluster in the embedding space. Therefore, we train a \newmethod{} with 2-layer encoders, 2 aggregation layers, 20 output size, and $F=8$. To project the 20-dimensional embeddings into 2 dimensions, we apply TSNE (T-distributed Stochastic Neighbor Embedding). As plotted in Figure \ref{fig:learnt_represetations_algorithms}, the pipelines with the same estimator and dimensionality reducer are creating clusters. Note that embeddings of the same algorithms are forming clusters and capturing the similarity between other algorithms. The groups in this latent space are also indicators of performance on a specific task. 
In Figure \ref{fig:learnt_representations_tasks} we show the same pipeline's embeddings with a color marker indicating its accuracy on two meta-testing tasks. Top-performing pipelines (yellow color) are relatively close to each other in both tasks and build up regions of good pipelines. These groups of good pipelines are different for every task, which indicates that there is not a single pipeline that works for all tasks. Such results demonstrate how \newmethod{} maps the pipelines to an embedding space where it is easier to assess the similarity between pipelines and therefore to search for well-performing pipelines.

\section{Conclusion}

\label{section:limitations}
In this paper, we have shown that efficient Machine Learning pipeline representations can be computed with deep modular networks. Such representations help discover more accurate pipelines compared to the state-of-art approaches because they capture the interactions of the different pipelines algorithms and their hyperparameters via meta-learning and/or the architecture. Moreover, we show that introducing per-algorithm encoders helps in the case of limited meta-trained data, or when a new algorithm is added to the search space. Overall, we demonstrate that our method \newmethod{} achieves the new state-of-the-art in Pipeline Optimization. Future work could extend our representation network to model more complex use cases such as parallel pipelines or ensembles of pipelines.

\begin{acks}
This research was funded by the Deutsche Forschungsgemeinschaft (DFG, German Research Foundation) under grant number 417962828 and grant INST 39/963-1 FUGG (bwForCluster NEMO). In addition, Josif Grabocka acknowledges the support of the BrainLinks- BrainTools Center of Excellence, and the funding of the Carl Zeiss foundation through the ReScaLe project.
\end{acks}

\bibliographystyle{ACM-Reference-Format}
\bibliography{kdd_bibliography}

\appendix

\newpage
\appendix

\section{Potential Negative Societal Impacts}
\label{appendix:negative_impacts}

The meta-training is the most demanding computational step, thus it can incur in high energy consumption. Additionally, \newmethod{} does not handle fairness, so it may find pipelines that are biased by the data. 
\section{Licence Clarification}
\label{appendix:license}
The results of this work (code, data) are under license BSD-3-Clause license. Both the PMF~\cite{PMFGit}, Tensor-OBOE~\cite{OboeGit} and ZAP~\cite{ZAPGit} datasets hold the same license.

\section{Discussion on the Interactions among Components}
\label{appendix:interactions_discussion}
\begin{figure}  
    \centering
    \vspace{-0.3cm}
    \includegraphics[width=.7\linewidth]{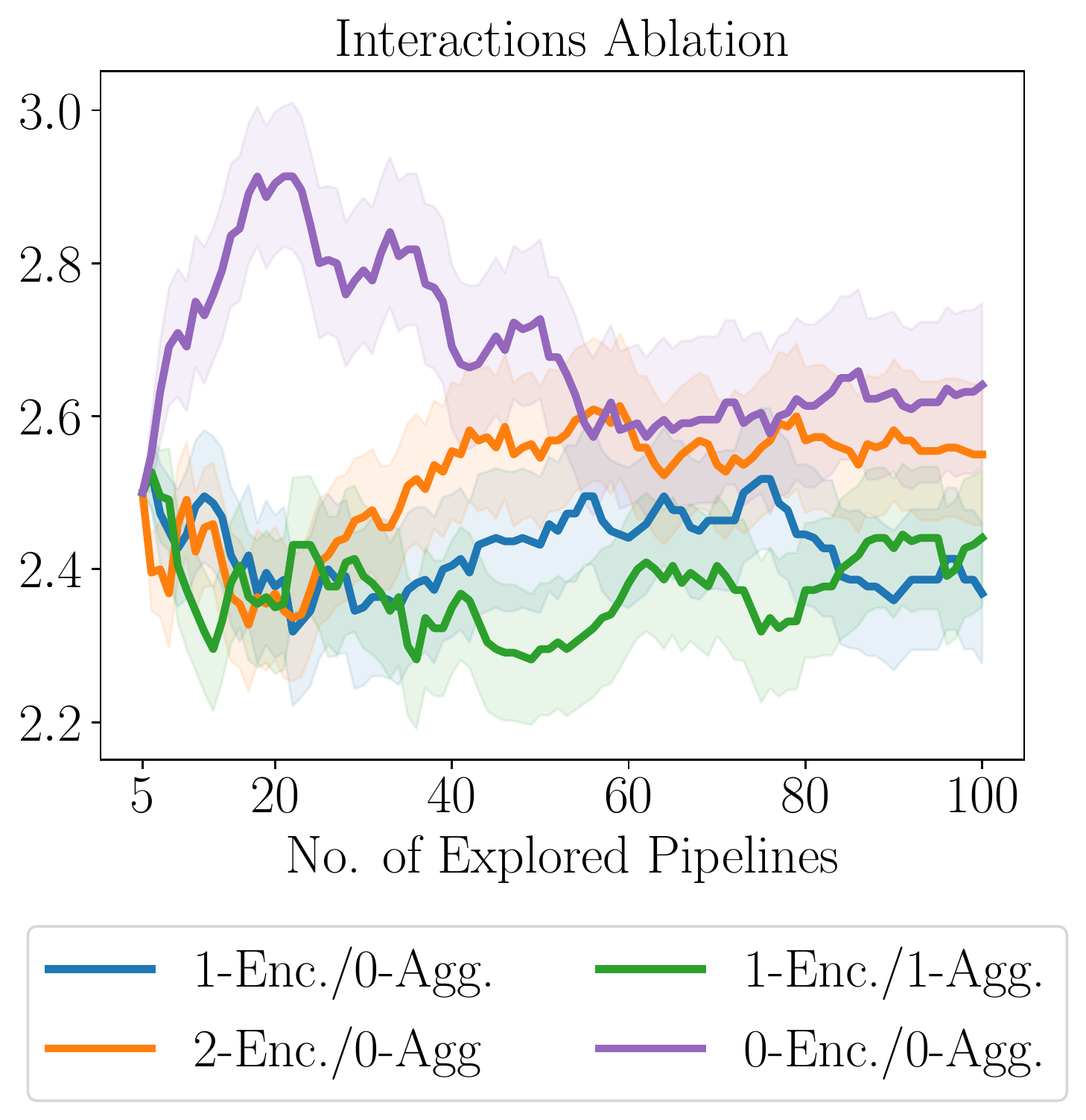}
    \caption{{Average rank for \newmethod{} with and without encoder and and aggregation layers.}}
    \vspace{-0.3cm}
    \label{fig:interactions_ablation}
\end{figure}
The encoder and aggregation layers capture interactions among the pipeline components and therefore are important to attain good performance. These interactions are reflected in the features extracted by these layers, i.e. the pipeline representations obtained by \newmethod{}. These representations lie on a metric space that captures relevant information about the pipelines and which can be used on the kernel for the Gaussian Process. Using the original input space does not allow the extraction of rich representations. To test this idea, we meta-train four versions of \newmethod{} with and without encoder and aggregation layers on our TensorOBOE meta-train set and then test on the meta-test split. In Figure \ref{fig:interactions_ablation}, we show that the best version is obtained when using both encoder (\textit{Enc.}) and aggregation (\textit{Agg.}) layers (green line), whereas the worst version is obtained when using the original input space, i.e. no encoder and no aggregation layers. Having an encoder helps more than otherwise, thus it is important to capture interactions among hyperparameters in the same stage. As having an aggregation layer is better than not, capturing interactions among components from different stages is important.

\section{Architectural Implementation}

\begin{figure}

\centering
 \includegraphics[width=0.99\linewidth]{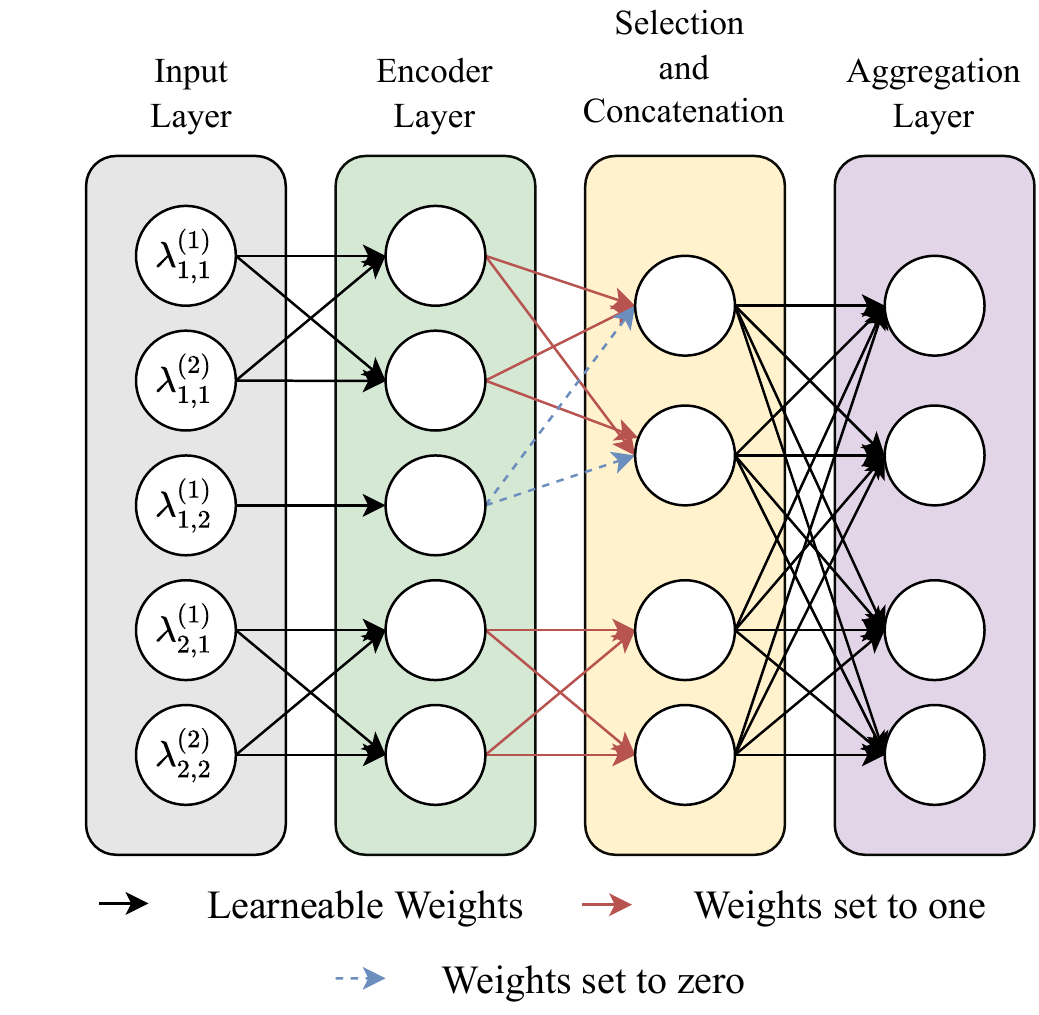}  
\caption{Example of the Implementation of \newmethod{} as MLP. $\lambda^{(k)}_{i,j}$ indicates the $k$-th hyperparameter of the $j$-th algorithm in the $i$-th stage. In this architecture, the first stage has two algorithms, thus two encoders. The algorithm $1$ is active for stage $1$. The second stage has only one algorithm.}
\label{fig:architecture_implementation}

\end{figure}

\newmethod{}'s architecture (encoder layers + aggregated layers) can be formulated as a Multilayer Perceptron (MLP) comprising three parts (Figure \ref{fig:architecture_implementation}). The first part of the network that builds the layers with encoders is implemented as a layer with masked weights. We connect the input values corresponding to the hyperparameters $ \lambda_{(i,j)}$ of the $j$-th algorithm of the $i$-th stage to a fraction of the neurons in the following layer, which builds the encoder. The rest of the connections are dropped. The second part is a layer that selects the output of the encoders associated with the active algorithms (one per stage) and concatenates their outputs (\textit{Selection \& Concatenation}). The layer's connections are fixed to be either one or zero during forward and backward passes. Specifically, they are one if they connect outputs of active algorithms' encoders, and zero otherwise. The last part, an \textit{aggregation layer}, is a fully connected layer that learns interactions between the concatenated output of the encoders. By implementing the architecture as an MLP instead of a multiplexed list of components(e.g. with a module list in PyTorch), faster forward and backward passes are obtained. We only need to specify the selected algorithms in the forward pass so that the weights in the \textit{Encoder Layer} are masked and the ones in the \textit{Selection \& Concatenation} are accordingly set. After this implementation, notice that \newmethod{} can be interpreted as an MLP with \textbf{sparse} connections. Further details on the architecture are discussed in Appendix \ref{appendix:architecture_details}.

\section{Additional Results}
\label{appendix:additional_results}

In this section, we present further results. Firstly, we show an ablation of the factor that determines the number of hidden units ($F$) in Figure \ref{fig:rank_ablation}. It shows that $F=8$ attains the best performance after exploring 100 pipelines in both datasets. Additionally, we present the average regret for the ablation of $F$, and the results of Experiment 1 and 2 in Figures \ref{fig:regret_ablation}, \ref{fig:regret_non_transfer} and \ref{fig:regret_transfer} respectively. The average regret is defined as $y_{max}-y^*$, where $y_max$ is the maximum accuracy possible within the task and $y^*$ is the maximum observed accuracy.

\begin{figure}[ht]
    \includegraphics[width=.99\linewidth]{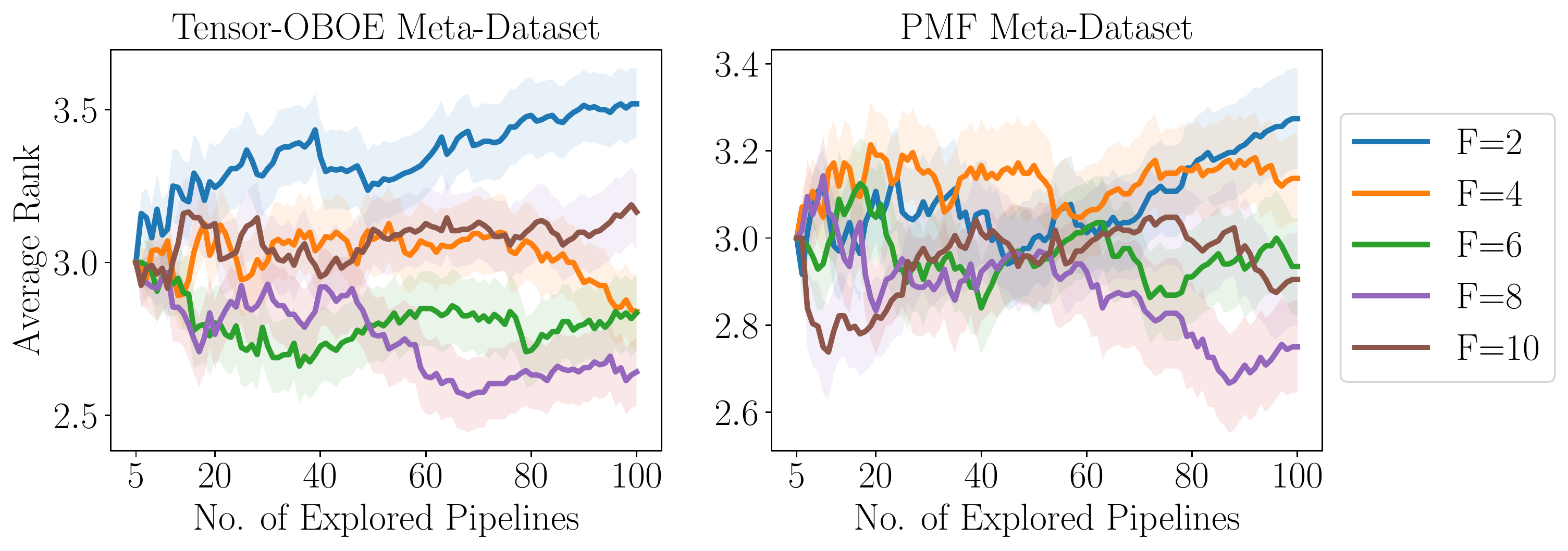}
    \caption{Comparison of different $F$ values in \newmethod{} (Rank).}
    \label{fig:rank_ablation}
\end{figure}

\begin{figure}[ht]
    \includegraphics[width=.99\linewidth]{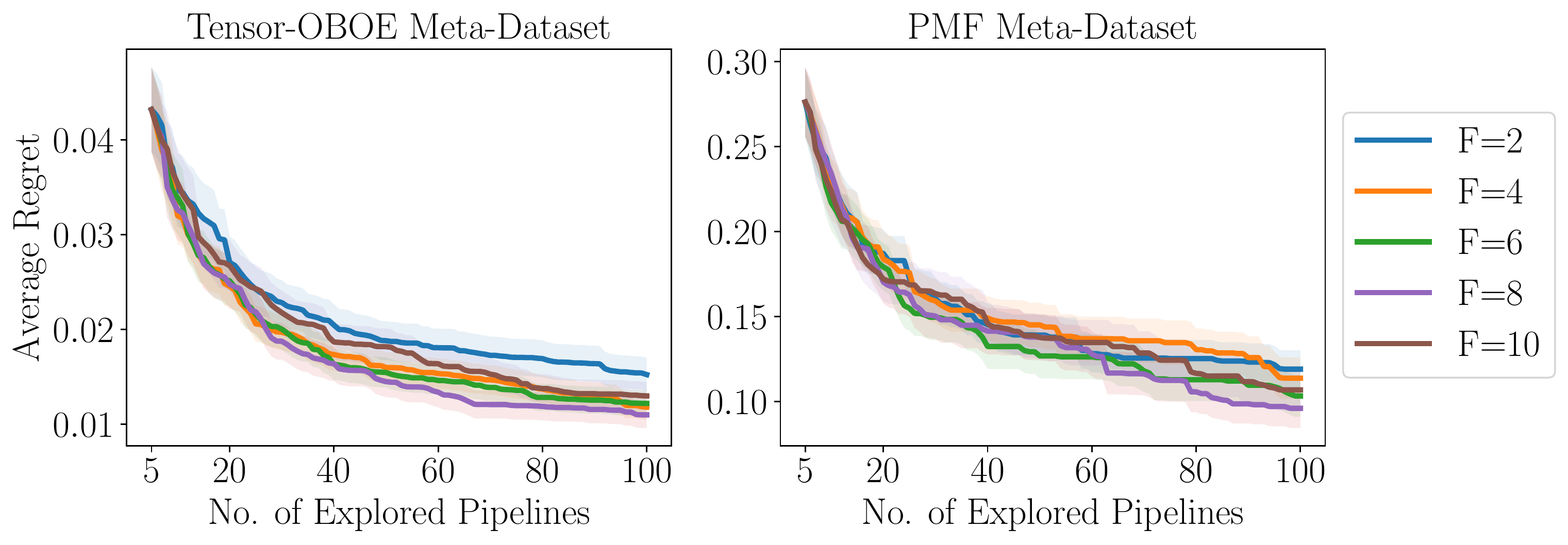}
    \caption{Comparison of different $F$ values in \newmethod{} (Regret).}
    \label{fig:regret_ablation}
\end{figure}

\begin{figure}[ht]
    \includegraphics[width=.99\linewidth]{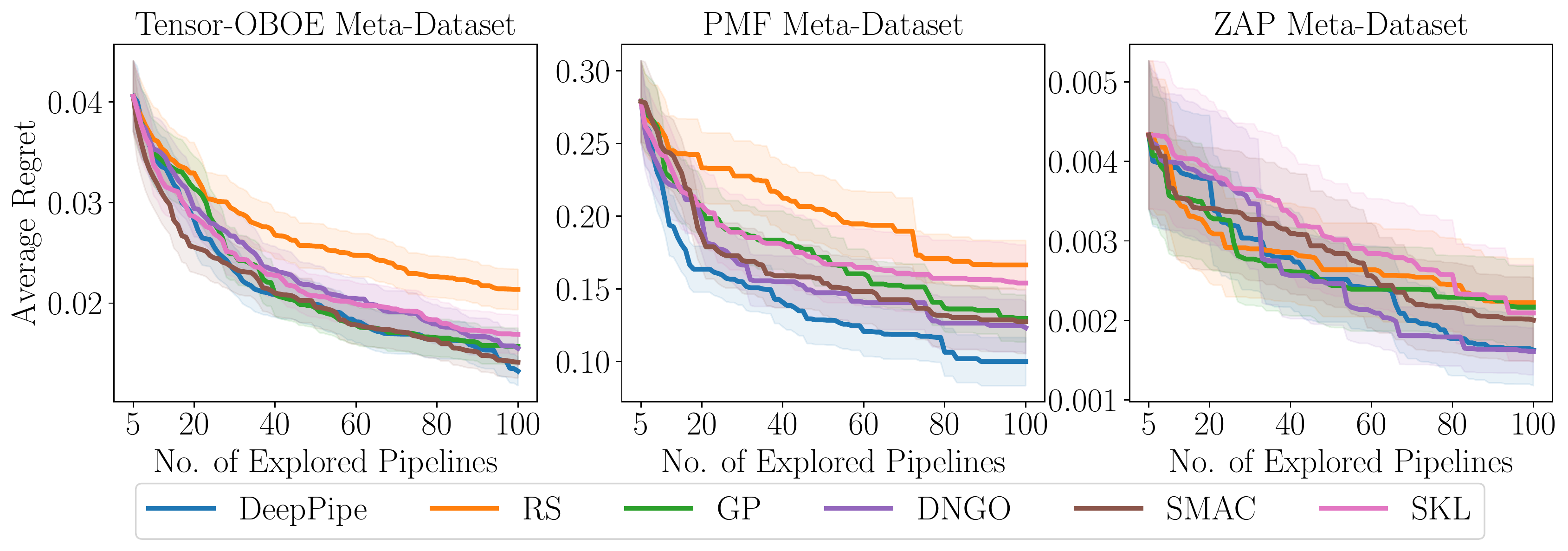}
    \caption{Comparison of \newmethod{} vs. non transfer-learning PO methods in Experiment 1 (Regret)}
    \label{fig:regret_non_transfer}
\end{figure}

\begin{figure}[ht]
    \includegraphics[width=1.0\linewidth]{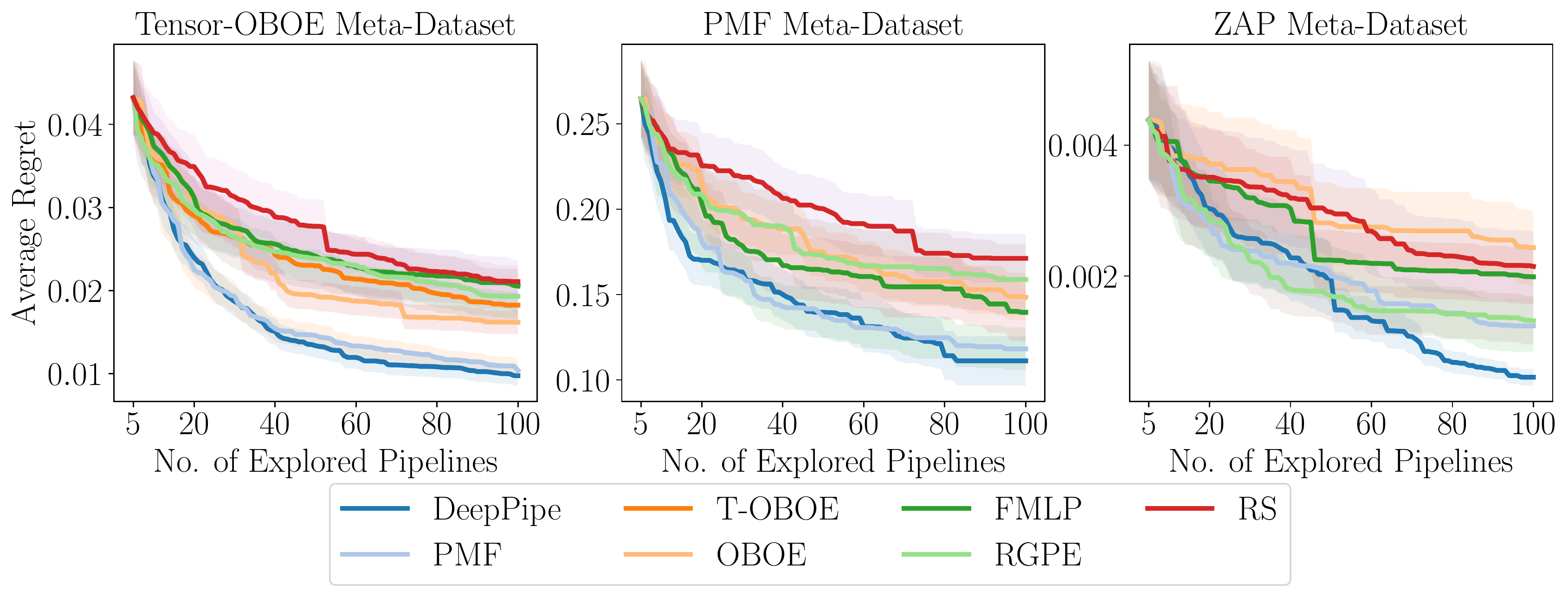}
    \caption{Comparison of Regret in \newmethod{} vs. transfer-learning PO methods in Experiment 2 (Regret)}
    \label{fig:regret_transfer}
\end{figure}


Table \ref{tab:omit_estimator_pmf } presents the extended results of omitting estimators in the PMF Dataset. From these, we draw the same conclusion as in the same paper: having encoders help to obtain better performance when a new algorithm is added to a pipeline.

\begin{table*}[t]
\caption{Average rank among \newmethod{} variants for newly-added algorithms (PMF)}\label{tab:omit_estimator_pmf }
\begin{tabular}{ccccccccccccc}\toprule
\multirow{2}{*}{Enc.} &\multirow{2}{*}{MTd.} &\multicolumn{2}{c}{Omitted in} &\multicolumn{8}{c}{Omitted Estimator} \\ \cmidrule(ll){3-4} \cmidrule(ll){5-12}
& &MTr. &MTe. &ET &RF &XGBT &KNN &GB &DT &Q/LDA &NB \\\midrule
\cmark & \cmark & \cmark & \cmark &3.1527 &3.1645 &3.2109 &3.2541 &3.2874 &3.2741 &3.1911 &3.0263 \\
\cmark & \xmark & \cmark & \xmark &3.2462 &3.3208 &3.2592 &3.3180 &3.2376 &3.2249 &3.3557 &3.3993 \\
\cmark & \cmark & \xmark & \xmark & \textbf{2.5710} & \textbf{2.5996} &\textbf{2.4011} &\textbf{2.5947} &\textbf{2.6301} &\textbf{2.5664} &\textbf{2.6252} &\textbf{2.6214} \\
\xmark &\cmark &\cmark & \xmark&3.0464 & \underline{2.8550} &3.0850 & \underline{2.8845}  &2.9397  &3.0316 &2.9530 &3.0596 \\
\cmark &\cmark &\cmark & \xmark&\underline{2.9838} &3.0601 & \underline{3.0439} & 2.9486 & \underline{2.9051} & \underline{2.9029} &\underline{2.8750} &\underline{2.8934} \\
\bottomrule
\end{tabular}
\end{table*}

We carry out an ablation to understand the difference between the versions of Deep Pipe with/without encoder and with/without transfer-learning using ZAP Meta-dataset. As shown in Figure \ref{fig:zap_ablation_on_rank}, the version with transfer learning and one encoder performs the best, thus, highlighting the importance of encoders in transfer learning our \newmethod{} surrogate.

\begin{figure}[!b]
     \centering
    \includegraphics[width=.99\linewidth]{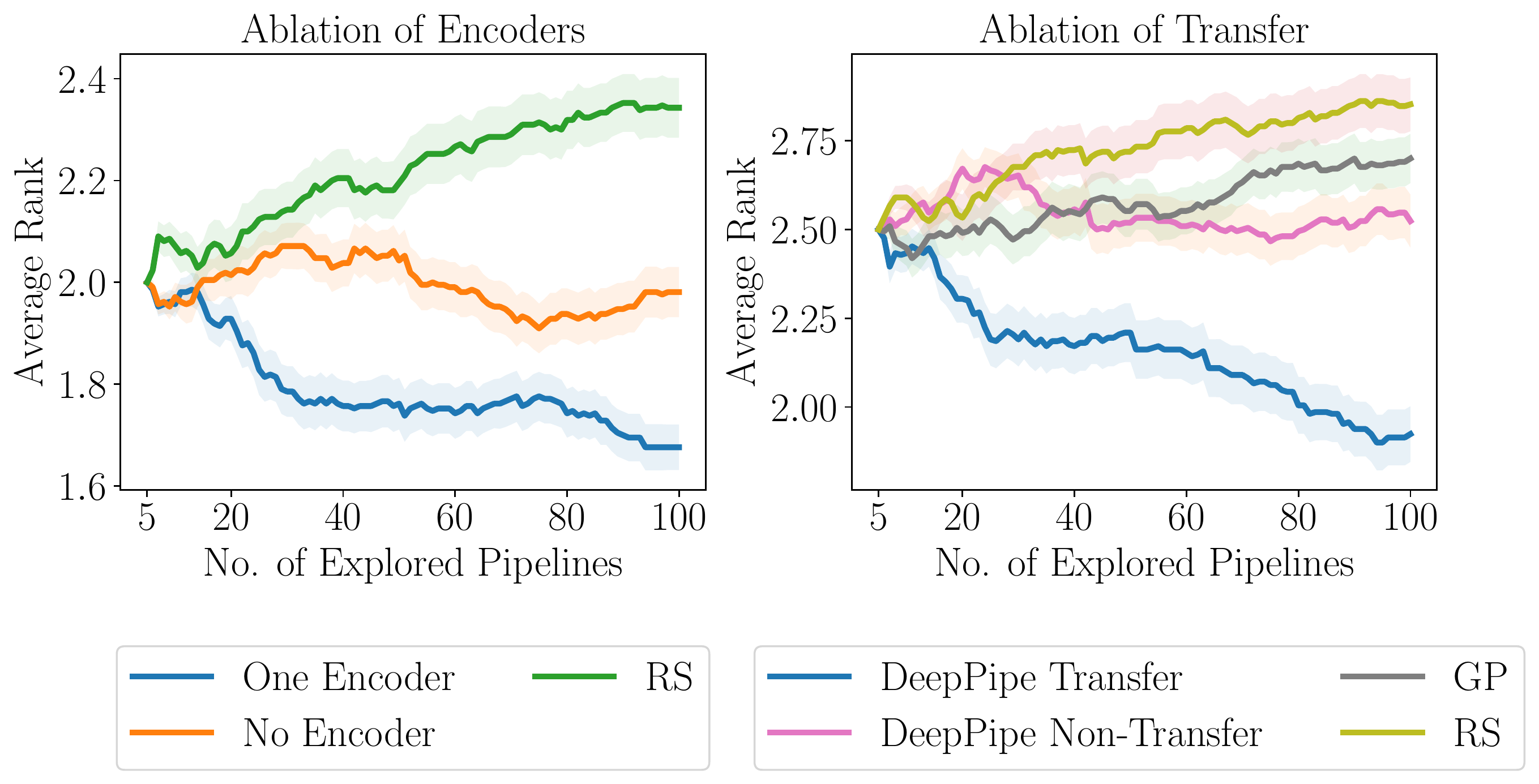}
    \caption{Ablations on the ZAP meta-dataset}
    \label{fig:zap_ablation_on_rank}
\end{figure}

\section{Architecture Details}
\label{appendix:architecture_details}

The input to the kernel has a dimensionality of $Z$=20. We fix it, to be the same as the output dimension for PMFs. The number of neurons per layer, as mentioned in the main paper, depends on $F$. Consider an architecture with with no encoder layers and $\ell_a$ aggregation layers, and hyperparameters $\Lambda_{i,j}, i \in \{1 \dots N\}, j \in \{1 \dots M_i \}$ (following the notation in section \ref{section:pipeline_embedding}) with $L_i = \max_j |\Lambda_{i,j}|$, then the number of weights (omitting biases for the sake of simplicity) will be:

\begin{equation}
    \left(\sum_{i,j}  |\Lambda_{i,j}| \right) \cdot  \left( F \cdot  \sum_i L_i  \right) + (\ell_a-1) \left( F \cdot \sum_i L_i \right)^2
\end{equation}

If the architecture has $\ell_e$ encoder layers and $\ell_a$ aggregation layers, then the number of weights is given by:

\begin{equation}
    \sum_{i,j} |\Lambda_{i,j}|\cdot  \left( F \cdot L_i \right) +  (\ell_e-1)\sum_i M_i  \cdot \left( F \cdot L_i \right)^2 + \ell_a \left( F \cdot \sum_i L_i \right)^2
\end{equation}

 In other words, the aggregation layers have $F\cdot \sum_i L_i$ hidden neurons, whereas every encoder from the $i$-th stage has $F \cdot  l_i$ neurons per layer. The input sizes are $\sum_{i,j}  |\Lambda_{i,j}|$ and $|\Lambda_{i,j}|$ for both cases respectively. The specific values for $|\Lambda_{i,j}|$ and $ L_i$ per search space are specified in Appendix \ref{appendix:search_spaces}.

In the search space for PMF, we group the algorithms related to Naive Bayers (MultinomialNB, BernoulliNB, GaussianNB) in a single encoder. In this search space, we  also group LDA and QDA. In the search space of TensorOboe, we group GaussianNB and Perceptron as they do not have hyperparameters. Given these considerations, we can compute the input size and the weights per search space as function of $\ell_a, \ell_e, F$ as follows:

(i) Input size:

\begin{equation}
\begin{split}
    \text{\# Input size (PMF) } =  \sum_{i,j}  |\Lambda_{i,j}| = 72
    \\
    \text{\# Input (TensorOboe) } =  \sum_{i,j}  |\Lambda_{i,j}| = 37
    \\
     \text{\# Input (ZAP) } =  \sum_{i,j}  |\Lambda_{i,j}| = 35
\end{split}
\end{equation}

(ii) Number of weights for architecture without encoder layers:
\begin{equation}
\begin{split}
    \text{\# Weights (PMF) } = 720 \cdot F +  256 \cdot (\ell_a-1)\cdot F^2 
    \\
    \text{\# Weights (TensorOboe) } = 444 \cdot F +  144 \cdot (\ell_a-1)\cdot F^2
    \\
    \text{\# Weights (ZAP) } = 1085 \cdot F +   961  \cdot (\ell_a-1)\cdot F^2
\end{split}
\end{equation}

(iii) Number of weights for architecture with encoder layers:

\begin{equation}
\begin{split}
    \text{\#  Weights (PMF) } = 886 \cdot F +  \left(1376 \cdot (\ell_e-1) + 256 \cdot \ell_a \right)\cdot F^2 
    \\
    \text{\# Weights (TensorOboe) } = 161 \cdot F +  \left(271 \cdot (\ell_e-1) + 144 \cdot \ell_a \right)\cdot F^2
    \\
    \text{\# Weights (ZAP) } = 35 \cdot F +  \left( 965 \cdot (\ell_e-1) + 961 \cdot \ell_a \right) \cdot F^2    
\end{split}
\end{equation}

\begin{figure}
    \centering
    \includegraphics[width=0.99\linewidth]{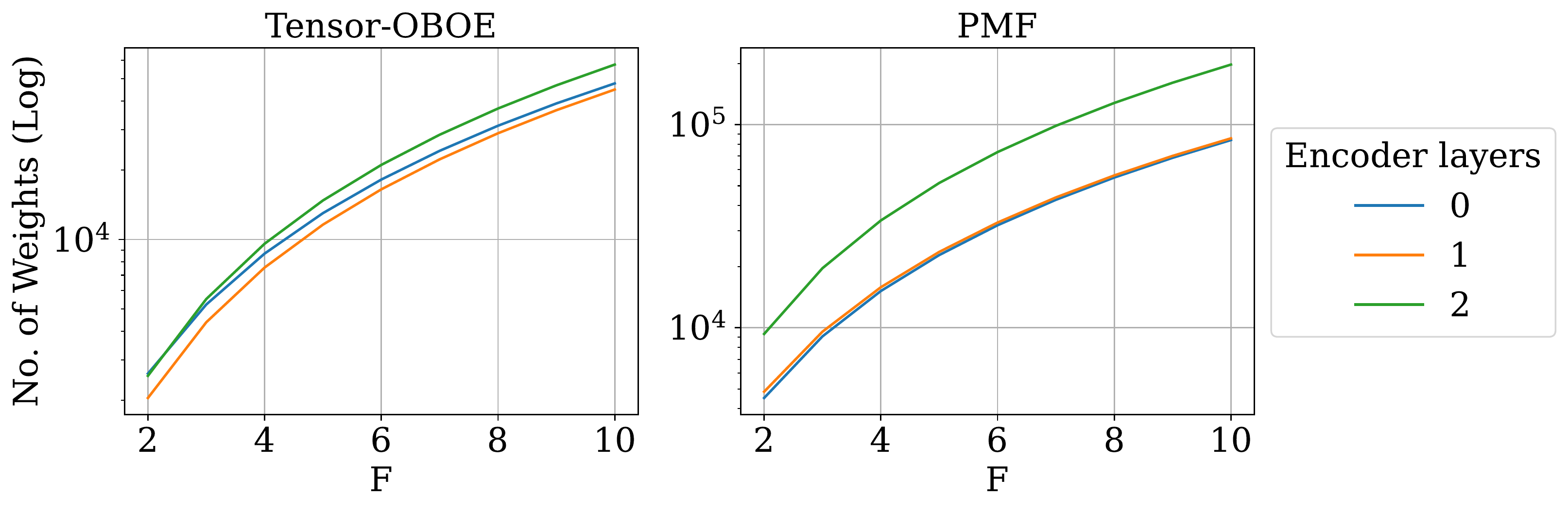}
    \caption{Number of weights in the MLP for a given value of $F$ and encoder layers.}
    \label{fig:weights_per_f}
\end{figure}

According the previous formulations, Figure \ref{fig:weights_per_f} shows how many parameters (only weights) the MLP has given a specific value of F and of encoder layers. We fix the total number of layers to four. Notice that the difference in the number of parameters between an architecture with 1 and 2 encoder layers is small in both search spaces. 
Notice that we associate algorithms with no hyperparameters to the same encoder in our experiments (Appendix \ref{appendix:search_spaces}). Moreover, we found that adding the One-Hot-Encoding of the selected algorithms per stage as an additional input is helpful. Therefore, the input dimensionality of the aggregated layers is equal to the dimension after concatenating the encoder's output $F \cdot  \sum_i (Q_i+M_i )$.

\section{Abbreviations}
\label{appendix:abbreviations}
(i) Abbreviations in Table 2:

1) ET: ExtraTrees, 2) GBT: Gradient Boosting, 3) Logit: Logistict Regression 4) MLP: Multilayer Perceptron 5) RF: Random Forest, 6) lSVM: Linear Support Vector Machine,  7) kNN: k Nearest Neighbours, 8) DT: Decision Trees, 9) AB: AdaBoost, 10) GB/PE= Gaussian Naive Bayes/Perceptron.

(ii) Abbreviations in Table 3:

1) ET: ExtraTrees, 2) RF: Random Forest , 3) XGBT: Extreme Gradient Boosting, 4) kNN: K-Nearest Neighbours, 5) GB: Gradient Boosting, 6) DT: Decision Trees, 7) Q/LDA: Quadratic Discriminant Analysis/ Linear Discriminant Analysis, 8) NB: Naive Bayes.

\section{Theoretical Insight of Hypothesis 5}

Here, we formally demonstrate that the \newmethod{} with encoder layers is grouping hyperparameters from the same algorithm in the latent space, better than \newmethod{} without encoders, formulated on Corollary \ref{corollary:assumption}, which is supported by Proposition \ref{proposition:encoders_per_stage}.

\label{appendix:theoretical_insight}
\begin{lemma}
\label{lemma:expected_value_of_norm}
Given $\boldsymbol{w} \in \mathrm{R}^{M}$, a vector of weights with independent and identically distributed components $w_i \in \{w_1,...,w_M\}$ such that $w_i \sim p(w)$, the expected value of the square of the norm $\mathbb{E}_{p(w)}(||\boldsymbol{w}||^2)$ is given by $M\cdot(\mu_{w}^2+\sigma_{w}^2)$, where $\mu_{w}$ and $\sigma_{w}$ are the mean and standard deviation of $p(w)$ respectively.
\end{lemma}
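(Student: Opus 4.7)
The plan is to expand the squared Euclidean norm componentwise and then exploit the i.i.d.\ assumption together with the standard identity relating second moment, variance, and mean. Concretely, I would first write $\|\boldsymbol{w}\|^2 = \sum_{i=1}^{M} w_i^2$ using the definition of the Euclidean norm, then apply linearity of expectation to pull the sum outside, yielding $\mathbb{E}_{p(w)}[\|\boldsymbol{w}\|^2] = \sum_{i=1}^{M} \mathbb{E}_{p(w)}[w_i^2]$.

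Next, I would use the elementary identity $\mathbb{E}[X^2] = \mathrm{Var}(X) + (\mathbb{E}[X])^2$, applied to each coordinate $w_i$. Since every $w_i$ follows the same distribution $p(w)$ with mean $\mu_w$ and standard deviation $\sigma_w$, this gives $\mathbb{E}_{p(w)}[w_i^2] = \sigma_w^2 + \mu_w^2$ uniformly in $i$. Summing $M$ identical terms then produces the stated result $M \cdot (\mu_w^2 + \sigma_w^2)$.

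I would note that independence of the $w_i$ is not actually needed for this particular statement; only identical marginals and linearity of expectation are invoked. Independence becomes relevant only if one wants to compute higher moments such as $\mathrm{Var}(\|\boldsymbol{w}\|^2)$, which is not asked here. Thus there is no real obstacle in this proof; it is a two-line calculation, and the only care needed is to explicitly state the second-moment identity and the linearity step, both of which are immediate. The lemma will then serve as a building block for Proposition~\ref{proposition:encoders_per_stage} and Corollary~\ref{corollary:assumption}, where controlling the expected squared distance between embeddings under a Gaussian weight initialization is the real goal.
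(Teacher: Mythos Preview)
Your proposal is correct and follows exactly the same steps as the paper's proof: expand the squared norm as a sum of $w_i^2$, apply linearity of expectation, invoke the identity $\mathbb{E}[w_i^2]=\mu_w^2+\sigma_w^2$, and sum the $M$ identical terms. Your side remark that independence is not actually used here is also accurate and a nice observation beyond what the paper states.
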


\begin{proof}
\begin{align}
\centering
\mathbb{E}_{p(w)}\left(||\boldsymbol{w}||^2 \right) &= \mathbb{E}_{p(w)}\left(\sum_{i=1}^M w_i^2\right) \\
& = \sum_{i=1}^M  \mathbb{E}_{p(w)} (w_i^ 2) \\
& =\sum_{i=1}^M  \mu_{w}^2+\sigma_{w}^2 \\
& = M\cdot(\mu_{w}^2+\sigma_{w}^2)
\end{align}
\end{proof}

\begin{lemma}
\label{lemma:expected_value_linear_function_ouput}
Consider a linear function with scalar output $z=\boldsymbol{w}^T\boldsymbol{x}$ where $\boldsymbol{w} \in \mathrm{R}^{M \times 1}$ is the vector of weights with components $w_i, i \in \{1,...,M\}$, $\boldsymbol{x} \in \mathrm{R}^{M \times 1}$ are the input features. Moreover, consider the weights are independently and identically distributed $w_i \sim p(w)$. The expected value of the norm of the output is given by $\mathbb{E}_{p(w)} \left( || \boldsymbol{w}^T\boldsymbol{x} ||^2 \right)= (\mu_{w}^2+\sigma_{w}^2) \cdot  ||\boldsymbol{x}||^2 + \mu_w^2 \cdot \sum^M_{i=1} \sum^{i-1}_{j=1}  x_i \cdot x_j$.
\end{lemma}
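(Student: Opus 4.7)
The plan is to expand the scalar $z = \boldsymbol{w}^T\boldsymbol{x} = \sum_{i=1}^M w_i x_i$, square it, and then apply linearity of expectation together with the i.i.d.\ assumption on the weights. Since the output is a scalar we have $\|\boldsymbol{w}^T \boldsymbol{x}\|^2 = z^2$, so the left-hand side becomes $\mathbb{E}_{p(w)}\!\left[\bigl(\sum_i w_i x_i\bigr)^2\right]$. Expanding the square gives a diagonal contribution $\sum_i w_i^2 x_i^2$ and an off-diagonal contribution involving $w_i w_j x_i x_j$ for $i \neq j$. The deterministic factors $x_i, x_j$ pull out of the expectation, which reduces the proof to evaluating $\mathbb{E}[w_i^2]$ and $\mathbb{E}[w_i w_j]$.

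Next, I would separately compute the two resulting moment terms. For the diagonal, $\mathbb{E}[w_i^2] = \mu_w^2 + \sigma_w^2$ (this is effectively the one-dimensional version of the preceding Lemma~\ref{lemma:expected_value_of_norm}, or equivalently the variance identity $\mathrm{Var}(w) = \mathbb{E}[w^2] - \mu_w^2$). For the off-diagonal, independence of $w_i$ and $w_j$ for $i \neq j$ gives $\mathbb{E}[w_i w_j] = \mathbb{E}[w_i]\,\mathbb{E}[w_j] = \mu_w^2$. Substituting these back yields
\begin{equation*}
\mathbb{E}_{p(w)}[z^2] \;=\; (\mu_w^2 + \sigma_w^2)\sum_{i=1}^M x_i^2 \;+\; \mu_w^2 \sum_{i \neq j} x_i x_j,
\end{equation*}
and since $\sum_{i=1}^M x_i^2 = \|\boldsymbol{x}\|^2$, the first term matches the claimed $(\mu_w^2 + \sigma_w^2)\|\boldsymbol{x}\|^2$.

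Finally, I would rewrite the off-diagonal sum in the ordered form used in the statement by collecting the symmetric pairs: $\sum_{i \neq j} x_i x_j = 2 \sum_{i=1}^M \sum_{j=1}^{i-1} x_i x_j$, producing the second term $\mu_w^2 \sum_{i=1}^M \sum_{j=1}^{i-1} x_i x_j$ (absorbing the factor of $2$ into the convention used in the statement; if read strictly, the symmetric double sum actually carries an extra factor of two, which I would flag as a minor constant-factor bookkeeping issue, but it does not alter the proof structure). The only real obstacle is this accounting of the symmetric off-diagonal pairs and making sure the independence assumption is used exactly once per distinct pair; everything else is a direct application of linearity of expectation and the moment identities. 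No step requires more than elementary manipulation, so the proof will be short.
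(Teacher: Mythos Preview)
Your proposal is correct and follows essentially the same route as the paper: expand $(\boldsymbol{w}^T\boldsymbol{x})^2$, split into diagonal and off-diagonal sums, apply $\mathbb{E}[w_i^2]=\mu_w^2+\sigma_w^2$ (via Lemma~\ref{lemma:expected_value_of_norm}) and $\mathbb{E}[w_i w_j]=\mu_w^2$ by independence, and then collect the symmetric pairs. Your flag on the factor of~$2$ is well placed; the paper's own derivation in fact ends with $(\mu_w^2+\sigma_w^2)\|\boldsymbol{x}\|^2 + 2\,\mu_w^2\sum_{i}\sum_{j<i} x_i x_j$, so the missing~$2$ is a typo in the lemma statement rather than a gap in your argument.
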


\begin{proof}
\begin{align}
    & \mathbb{E}_{p(w)} \left( ( \boldsymbol{w}^T\boldsymbol{x} )^2 \right) \\ & = \mathbb{E}_{p(w)} \left( \sum_{i=1}^M w_i\cdot x_i \right)^ 2 \\
    & = \mathbb{E}_{p(w)} \left( \sum_{i=1}^M (w_i\cdot x_i )^ 2  + \sum^M_{i=1} \sum^{i-1}_{j=1} w_i \cdot w_j \cdot x_i \cdot x_j   \right) \\
    & =   \sum_{i=1}^M \mathbb{E}_{p(w)}(w_i^ 2)\cdot x_i^ 2  + 2\cdot\sum^M_{i=1} \sum^{i-1}_{j=1} \mathbb{E}_{p(w)}(w_i \cdot w_j) \cdot x_i \cdot x_j   \\
\end{align}
Since $w_i, w_j$ are independent  then $\mathbb{E}_{p(w)}(w_i \cdot w_j)=\mathbb{E}_{p(w)}(w_i) \cdot \mathbb{E}_{p(w)}(w_j)=\mu_w^2$. Moreover, with a slight abuse in notation, we denote $\sum^M_{i=1} \sum^{i-1}_{j=1}  x_i \cdot x_j = \boldsymbol{x} \otimes  \boldsymbol{x}$. Given lemma \ref{lemma:expected_value_of_norm}, we obtain:
\begin{align}
    \mathbb{E}_{p(w)} \left( ( \boldsymbol{w}^T\boldsymbol{x} )^2 \right)  = (\mu_w^2 + \sigma_w^2) \cdot ||\boldsymbol{x}||^2 + 2\cdot \mu_w^2 \cdot \boldsymbol{x} \otimes  \boldsymbol{x} = D_w (\boldsymbol{x}) \\
\end{align}
where $D_w(\cdot)$ is introduced as an operation to simplify the notation.
\end{proof}

\begin{proposition}
\label{proposition:encoders_per_stage}

Consider two vectors $ \boldsymbol{x}^{\prime}, \hat{\boldsymbol{x}} \in \mathrm{R}^{M}$, and two weight vectors $\hat{\boldsymbol{w}}$ and $\boldsymbol{w}^{\prime}$, $\hat{\boldsymbol{w}}^ T\hat{\boldsymbol{x}} \in \mathrm{R}, {\boldsymbol{w}^{\prime}}^T\boldsymbol{x}^{\prime}\in \mathrm{R}$, such that the weights are iid. Then $\mathbb{E}_{p(w)} \left( ( \hat{\boldsymbol{w}}^ T\hat{\boldsymbol{x}} -{\boldsymbol{w}^{\prime}}^T\boldsymbol{x}^{\prime})^ 2 \right) > \mathbb{E}_{p(w)}\left( ( \hat{\boldsymbol{w}}^ T\hat{\boldsymbol{x}} -\hat{\boldsymbol{w}}^T\boldsymbol{x}^{\prime})^ 2 \right)$.
\end{proposition}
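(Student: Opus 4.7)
The natural strategy is to expand the two expectations as quadratic forms in the weights, exploit the iid assumption to evaluate the cross moments, and then reduce the inequality to a single scalar comparison. Introduce the shorthand $a = \hat{\boldsymbol{w}}^T\hat{\boldsymbol{x}}$, $b = {\boldsymbol{w}'}^T\boldsymbol{x}'$, $c = \hat{\boldsymbol{w}}^T\boldsymbol{x}'$. Writing
\[
\mathbb{E}[(a-b)^2] - \mathbb{E}[(a-c)^2] = \bigl(\mathbb{E}[b^2] - \mathbb{E}[c^2]\bigr) - 2\bigl(\mathbb{E}[ab] - \mathbb{E}[ac]\bigr),
\]
the goal is to show this quantity is positive.

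\noindent\textbf{Step 1: Eliminate the $\mathbb{E}[b^2]-\mathbb{E}[c^2]$ term.} Since $\hat{\boldsymbol{w}}$ and $\boldsymbol{w}'$ share the same component distribution $p(w)$, Lemma~\ref{lemma:expected_value_linear_function_ouput} applied with the common input $\boldsymbol{x}'$ gives $\mathbb{E}[b^2] = \mathbb{E}[c^2] = D_w(\boldsymbol{x}')$. So the first parenthesis vanishes and it suffices to prove $\mathbb{E}[ac] > \mathbb{E}[ab]$.

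\noindent\textbf{Step 2: Evaluate the two cross moments.} For $\mathbb{E}[ab]$, independence of $\hat{\boldsymbol{w}}$ and $\boldsymbol{w}'$ factorizes the expectation into $\mathbb{E}[a]\,\mathbb{E}[b] = \mu_w^2 \bigl(\mathbf{1}^T\hat{\boldsymbol{x}}\bigr)\bigl(\mathbf{1}^T\boldsymbol{x}'\bigr)$. For $\mathbb{E}[ac]$, both factors share the weight vector $\hat{\boldsymbol{w}}$, so I would expand $\mathbb{E}[ac] = \sum_{i,j}\mathbb{E}[\hat{w}_i\hat{w}_j]\,\hat{x}_i x'_j$ and split into diagonal ($\mathbb{E}[\hat{w}_i^2] = \mu_w^2 + \sigma_w^2$) and off-diagonal ($\mathbb{E}[\hat{w}_i\hat{w}_j] = \mu_w^2$) contributions. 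The off-diagonal pieces together with the diagonal's $\mu_w^2$ component reassemble into $\mu_w^2\bigl(\mathbf{1}^T\hat{\boldsymbol{x}}\bigr)\bigl(\mathbf{1}^T\boldsymbol{x}'\bigr)$, leaving a clean surplus of $\sigma_w^2\,\hat{\boldsymbol{x}}^T\boldsymbol{x}'$.

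\noindent\textbf{Step 3: Conclude.} Combining Steps 1--2 yields
\[
\mathbb{E}[(a-b)^2] - \mathbb{E}[(a-c)^2] = 2\sigma_w^2\,\hat{\boldsymbol{x}}^T\boldsymbol{x}',
\]
which is strictly positive provided $\sigma_w^2 > 0$ and $\hat{\boldsymbol{x}}^T\boldsymbol{x}' > 0$. The former holds for any non-degenerate weight initialization; the latter is natural in the paper's setting, where inputs are hyperparameter vectors scaled to $[0,1]$ (Section on Meta-Datasets Preprocessing), so entrywise nonnegativity makes the inner product nonnegative and generically strictly positive.

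\noindent\textbf{Main obstacle.} The only subtle point is the sign of $\hat{\boldsymbol{x}}^T\boldsymbol{x}'$: the statement as written does not explicitly constrain the inputs, yet the conclusion can fail for strongly anti-correlated inputs. I would therefore either make the nonnegativity of inputs an explicit hypothesis (justified by the $[0,1]$ preprocessing pipeline) or weaken the claim to ``with higher probability'' by appealing to the fact that, in the relevant encoder regime, the hyperparameter encodings of typical pipelines have nonnegative dot products. All other steps are routine second-moment calculations that follow directly from Lemmas~\ref{lemma:expected_value_of_norm} and \ref{lemma:expected_value_linear_function_ouput}.
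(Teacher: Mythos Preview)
Your approach is essentially the same as the paper's: expand each squared difference, use the iid moment identities to evaluate the cross terms, and reduce the comparison to a single scalar quantity. Your Step~1 observation that $\mathbb{E}[b^2]=\mathbb{E}[c^2]$ is exactly what the paper obtains via $D_w(\boldsymbol{x}')$, and your Step~2 mirrors the paper's computation of the two cross moments.

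There is, however, a meaningful difference in the cross-term calculation for the shared-weight case. The paper writes
\[
\mathbb{E}_{p(w)}\bigl[\hat{\boldsymbol{w}}^T\hat{\boldsymbol{x}}\cdot\hat{\boldsymbol{w}}^T\boldsymbol{x}'\bigr]
=(\mu_w^2+\sigma_w^2)\sum_{i,j}\hat{x}_i\,x'_j,
\]
effectively treating every pair $(i,j)$ as if $\mathbb{E}[\hat w_i\hat w_j]=\mu_w^2+\sigma_w^2$. Your diagonal/off-diagonal split is the more careful computation: only the $i=j$ terms pick up the extra $\sigma_w^2$, so the surplus is $\sigma_w^2\,\hat{\boldsymbol{x}}^T\boldsymbol{x}'$ (an inner product) rather than $\sigma_w^2\,(\mathbf{1}^T\hat{\boldsymbol{x}})(\mathbf{1}^T\boldsymbol{x}')$ (a product of coordinate sums). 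Your final expression $2\sigma_w^2\,\hat{\boldsymbol{x}}^T\boldsymbol{x}'$ is the right one.

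Your identification of the sign condition as the main obstacle is also apt. The paper's proof implicitly requires its double sum to be positive but never states this; you make the needed hypothesis explicit and justify it through the $[0,1]$ preprocessing described elsewhere in the paper, which is exactly the right move given that the proposition as stated omits any constraint on $\hat{\boldsymbol{x}},\boldsymbol{x}'$.
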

\begin{proof}
Using lemma \ref{lemma:expected_value_linear_function_ouput} and decomposition the argument within square:
\begin{align}
   & \mathbb{E}_{p(w)}(( \hat{\boldsymbol{w}}^ T\hat{\boldsymbol{x}} - {\boldsymbol{w}^{\prime}}^T\boldsymbol{x}^{\prime})^ 2) \\
& =  \mathbb{E}_{p(w)} \left( (\hat{\boldsymbol{w}}^ T\hat{\boldsymbol{x}})^2 + ({\boldsymbol{w}^{\prime}}^T\boldsymbol{x}^{\prime})^2 -2 \cdot \hat{\boldsymbol{w}}^ T\hat{\boldsymbol{x}} \cdot  {\boldsymbol{w}^{\prime}}^T\boldsymbol{x}^{\prime} \right) \\
& = D_w(\hat{\boldsymbol{x}}) + D_w(\boldsymbol{x}^{\prime}) - 2 \cdot \mathbb{E}_{p(w)} (\hat{\boldsymbol{w}}^ T\hat{\boldsymbol{x}} \cdot  {\boldsymbol{w}^{\prime}}^T\boldsymbol{x}^{\prime})\\
    & = D_w(\hat{\boldsymbol{x}}) + D_w(\boldsymbol{x}^{\prime}) - 2 \cdot \mathbb{E}_{p(w)} ( \sum_{i=1}^{\hat{M}} \hat{w}_i\cdot \hat{x}_i \sum_{j=1}^{M^{\prime}} {w_j}^{\prime}\cdot {x_j}^{\prime}) \\
    & = D_w(\hat{\boldsymbol{x}}) + D_w(\boldsymbol{x}^{\prime}) - 2 \cdot \mathbb{E}_{p(w)} ( \sum_{i=1}^{\hat{M}} \sum_{j=1}^{M^{\prime}} {w_j}^{\prime}\cdot {x_j}^{\prime} \cdot \hat{w}_i\cdot \hat{x}_i ) \\
    & = D_w(\hat{\boldsymbol{x}}) + D_w(\boldsymbol{x}^{\prime}) - 2 \cdot  \sum_{i=1}^{\hat{M}} \sum_{j=1}^{M^{\prime}} \mathbb{E}_{p(w)} ({w_j}^{\prime}\cdot \hat{w}_i) \cdot {x_j}^{\prime}\cdot \hat{x}_i 
\end{align}

Since $\hat{\boldsymbol{w}}$ and $\boldsymbol{w}^{\prime}$ are independent, then $\mathbb{E}_{p(w)} ({w_j}^{\prime}\cdot \hat{w}_i) = \mathbb{E}_{p(w)}({w_j}^{\prime}) \cdot  \mathbb{E}_{p(w)}(\hat{w}_i) = \mu_w^ 2$. Thus,

\begin{align}
    \mathbb{E}_{p(w)} \left(( \hat{\boldsymbol{w}}^ T\hat{\boldsymbol{x}} - {\boldsymbol{w}^{\prime}}^T\boldsymbol{x}^{\prime})^ 2 \right)    & = D_w(\hat{\boldsymbol{x}}) + D_w(\boldsymbol{x}^{\prime}) - 2 \cdot \textcolor{red}{\mu_w^ 2} \cdot  \sum_{i=1}^{\hat{M}} \sum_{j=1}^{M^{\prime}} {x_j}^{\prime}\cdot \hat{x}_i 
\end{align}

When computing  $\mathbb{E}_{p(w)}\left(( \hat{\boldsymbol{w}}^ T\hat{\boldsymbol{x}} -\hat{\boldsymbol{w}}^T\boldsymbol{x}^{\prime})^ 2 \right)$, we see that the weights are not independent, thus $\mathbb{E}_{p(w)} (\hat{w}_i \cdot \hat{w}_i) =\mu_w^2+\sigma_w^2 $, and

\begin{align}
   & \mathbb{E}_{p(w)} \left(( \hat{\boldsymbol{w}}^ T\hat{\boldsymbol{x}} - \hat{\boldsymbol{w}}^T\boldsymbol{x}^{\prime})^ 2 \right)  \\   & = D_w(\hat{\boldsymbol{x}}) + D_w(\boldsymbol{x}^{{\prime}}) - 2 \cdot \textcolor{red} {(\mu_w^ 2 + \sigma_w^2)} \cdot  \sum_{i=1}^{\hat{M}} \sum_{j=1}^{M^{\prime}} {x_j^{\prime}}\cdot \hat{x}_i \\
    & <  D_w(\hat{\boldsymbol{x}}) + D_w(\boldsymbol{x}^{\prime}) - 2 \cdot \textcolor{red}{\mu_w^ 2} \cdot  \sum_{i=1}^{\hat{M}} \sum_{j=1}^{M^{\prime}} {x_j}^{\prime}\cdot \hat{x}_i  \\ 
    & <   \mathbb{E}_{p(w)} \left(( \hat{\boldsymbol{w}}^ T\hat{\boldsymbol{x}} - {\boldsymbol{w}^{\prime}}^T\boldsymbol{x}^{\prime})^ 2 \right)  
\end{align}
\end{proof}

\begin{corollary}
 \label{corollary:assumption}
 
A random initialized \newmethod{} with encoder layers induces an assumption that two hyperparameter configurations of an algorithm should have more similar performance than hyperparameter configurations from different algorithms.
\end{corollary}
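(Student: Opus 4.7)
The plan is to derive the corollary directly from Proposition \ref{proposition:encoders_per_stage}, by identifying its abstract vectors with the hyperparameter subvectors that flow through \newmethod{}'s per-algorithm encoders. First I would fix three pipelines $\pipeline^{(l)}, \pipeline^{(m)}, \pipeline^{(n)}$ with $p_i^{(l)} = p_i^{(m)} \neq p_i^{(n)}$ at some stage $i$, i.e.\ pipelines $l$ and $m$ choose the \emph{same} algorithm at stage $i$ while $n$ chooses a different one. Under \newmethod{}'s architecture, the hyperparameter subvectors $\hp_{i,p_i^{(l)}}$ and $\hp_{i,p_i^{(m)}}$ are both fed into the \emph{same} encoder $\xi^{(i,p_i^{(l)})}$ with weights $\hat{\boldsymbol{w}}$, whereas $\hp_{i,p_i^{(n)}}$ is fed into a \emph{different} encoder $\xi^{(i,p_i^{(n)})}$ with independent weights $\boldsymbol{w}^{\prime}$. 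Because every weight is initialized i.i.d.\ from $\mathcal{N}(0,1)$, the assumptions of Proposition \ref{proposition:encoders_per_stage} are met.

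Applying the proposition coordinate-wise to the (vector-valued) encoder outputs, the expected squared $\ell_2$ distance between the outputs sharing the same encoder is strictly smaller than the expected squared distance when independent encoders are used, because $\mathbb{E}[\|\cdot\|^2]$ decomposes into a sum of per-neuron expectations, each satisfying the scalar inequality of the proposition. The \emph{Selector} step simply forwards the active encoder's output, so this stage-$i$ inequality transfers unchanged into the concatenated representation. Next I would propagate the inequality through the aggregation MLP $\psi$ by noting that $\psi$ is applied to three inputs that differ only in their stage-$i$ block, and that the linear layers of $\psi$ act with i.i.d.\ weights; by the same calculation as in the proof of Proposition \ref{proposition:encoders_per_stage}, a linear map with i.i.d.\ weights preserves the ordering of expected squared distances between inputs. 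The remaining stages contribute identically in expectation and cancel in the comparison.

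Thus one obtains $\mathbb{E}\bigl[\|\phi(\pipeline^{(l)}) - \phi(\pipeline^{(m)})\|^2\bigr] < \mathbb{E}\bigl[\|\phi(\pipeline^{(l)}) - \phi(\pipeline^{(n)})\|^2\bigr]$, which is exactly the co-location/separation assertion of Hypothesis 5. To translate this into a statement about \emph{predicted performance}, I would invoke the deep-kernel GP semantics: the Matérn 5/2 kernel $k(\phi(\pipeline^{(l)}), \phi(\pipeline^{(m)}))$ is monotonically decreasing in latent distance, so closer embeddings induce larger prior covariance between the corresponding predicted losses. Hence a randomly initialized \newmethod{} with encoders encodes the inductive bias that same-algorithm configurations should have more similar performance than cross-algorithm configurations.

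The main obstacle I anticipate is the propagation step through the nonlinear aggregation $\psi$: the scalar argument of Proposition \ref{proposition:encoders_per_stage} handles only a single linear layer, and ReLU composition does not automatically preserve expected-distance orderings. I would address this by restricting the formal claim to the linear (or first-layer) portion of $\psi$, where the same i.i.d.\ symmetry argument applies verbatim, and treating the nonlinearity as a monotone postprocessing that preserves the ordering of \emph{typical} distances while acknowledging that a fully general statement for deep nonlinear aggregators would require additional concentration or Gaussian-equivalence arguments beyond the scope of this corollary.
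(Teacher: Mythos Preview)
Your proposal is correct and follows the same approach as the paper: apply Proposition~\ref{proposition:encoders_per_stage} to conclude that same-algorithm encoder outputs are closer in expectation, then invoke the stationarity of the kernel to translate closeness into higher covariance and hence more similar predicted performance via the GP posterior. You are in fact more thorough than the paper's own argument, which stops at the encoder outputs $z^{(l)},z^{(m)},z^{(n)}$ and jumps directly to the kernel without ever treating the aggregation network $\psi$; the nonlinearity obstacle you flag is simply not discussed there.
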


\begin{proof}
 Given two hyperparameter configurations $\lambda^{(l)}, \lambda^{(m)}$ from an algorithm, and a third hyperparameter configuration $\lambda^{(n)}$ from a different algorithm, every random initialized encoder layer from \newmethod{} maps the hyperparameters $\lambda^{(l)}, \lambda^{(m)}$ to latent dimensions $z^{(l)},z^{(m)}$ that are closer to each other than to $z^{(n)}$, i.e. the expected distance among the output of the encoder layer will be $\mathbb{E}_{p(w)}(||z^{l}-z^{m}||) < \mathbb{E}_{p(w)}(||z^{l}-z^{n}||)$ based on Proposition \ref{proposition:encoders_per_stage}.  Since \newmethod{} uses a kernel such that $\kappa(\boldsymbol{x}, \boldsymbol{x}^{\prime}) = \kappa(\boldsymbol{x}- \boldsymbol{x}^{\prime})$, their similarity will increase, when the distance between two configurations decreases. Thus, according to Equation \ref{equation:posterior}, they will have correlated performance. 
\end{proof}

\section{Meta-Dataset Search Spaces}
\label{appendix:search_spaces}

We detail the search space composition in Tables \ref{tab:search_space_pmf } (PMF), \ref{tab:search_space_oboe} (TensorOBOE) and \ref{tab:search_space_zap} (ZAP). We specify the stages, algorithms, hyperparameters, number of components per stage $M_i$, the number of hyperparameters per algorithm $|\lambda_{i,j}|$, and the maximum number of hyperparameters found in an algorithm per stage $Q_i$.  For the ZAP meta-dataset, we defined a pipeline with two stages: (i) \textit{Architecture}, which specifies the type or architecture used (i.e. ResNet18, EfficientNet-B0, EfficientNet-B1, EfficientNet-B2), and (ii) \textit{Optimization-related Hyperparameters} that are shared by all the architectures.


\begin{table*}[!htp]\centering
\caption{Search Space for PMF Meta-Dataset}\label{tab:search_space_pmf }
\scriptsize
\begin{tabular}{lllllll}\toprule
\textbf{Stage} &\textbf{$L_i$} &\textbf{$M_i$} &\textbf{Algorithm} &\textbf{$|\Lambda_{i,j}|$} &\textbf{Hyperparameters} \\\midrule
\multirow{2}{*}{Preprocessor} &\multirow{2}{*}{3} &\multirow{2}{*}{2} &Polynomial &3 &include\_bias, interaction\_only, degree \\ \cmidrule{4-6}
& & &PCA &2 &keep\_variance, whiten \\ \hline
\multirow{11}{*}{Estimator} &\multirow{11}{*}{13} &\multirow{11}{*}{8} &ExtraTrees &9 & \makecell[l]{bootstrap, min\_samples\_leaf, n\_estimators, max\_features,\\  min\_weight\_fraction\_leaf, min\_samples\_split, max\_depth } \\ \cmidrule{4-6}
& & &RandomForest &10 &\makecell[l]{bootstrap, min\_samples\_leaf, n\_estimators, max\_features, \\ min\_weight\_fraction\_leaf, min\_samples\_split, max\_depth, \\ criterion\_entropy, criterion\_gini }\\ \cmidrule{4-6}
& & &XgradientBoosting &13 & \makecell[l]{reg\_alpha, col\_sample\_bytree, colsample\_bylevel, scale\_pos\_weight, \\  learning\_rate,\\ max\_delta\_step, base\_score, n\_estimators, subsample,\\ reg\_lambda, min\_child\_weight, max\_depth,  gamma } \\ \cmidrule{4-6}
& & &kNN &4 &p, n\_neighbors, weights\_distance, weights\_uniform \\ \cmidrule{4-6}
& & &GradientBoosting &10 & \makecell[l]{max\_leaf\_nodes, learning\_rate, min\_samples\_leaf,\\ n\_estimators, subsample, min\_weight\_fraction\_leaf, max\_features, \\ min\_samples\_split, max\_depth, loss\_deviance } \\ \cmidrule{4-6}
& & &DecisionTree &9 & \makecell[l]{max\_leaf\_nodes, min\_samples\_leaf, max\_features, \\ min\_weight\_fraction\_leaf, min\_samples\_split, max\_depth, \\splitter\_best, criterion\_entropy, criterion\_gini }\\ \cmidrule{4-6}
& & &LDA &6 & \makecell[l]{shrinkage\_factor, n\_components, tol, shrinkage\_-1, \\ shrinkage\_auto, shrinkage\_manual }\\ \cmidrule{4-6}
& & &QDA &1 & reg\_param \\ \cmidrule{4-6}
& & &BernoulliNB &2 &alpha, fit\_prior \\ \cmidrule{4-6}
& & &MultinomialNB &2 &alpha, fit\_prior \\ \cmidrule{4-6}
& & &GaussianNB &1 &apply\_gaussian\_nb \\
\bottomrule
\end{tabular}
\end{table*}

\begin{table*}[!htp]\centering
\caption{Search Space for Tensor-OBOE Meta-Dataset}\label{tab:search_space_oboe}
\scriptsize
\begin{tabular}{lllllll}\toprule
\textbf{Stage} &\textbf{$L_i$} &\textbf{$M_i$} &\textbf{Algorithm} &\textbf{$|\Lambda_{i,j}|$} &\textbf{Hyperparameters} \\\midrule
Imputer &4 &1 &SimpleImputer &4 & \makecell[l]{Strategy\_constant, Strategy\_mean,Strategy\_median, \\Strategy\_most\_frequent }\\ \midrule
Encoder &1 &1 &OneHotEncoder &1 &Handle\_unknown\_ignore \\ \midrule
Scaler &1 &1 &StandardScaler &1 &- \\ \midrule
\multirow{3}{*}{Dim. Reducer} &\multirow{3}{*}{1} &\multirow{3}{*}{3} &PCA &1 &N\_components \\ \cmidrule{4-6}
& & &SelectKBest &1 &K \\ \cmidrule{4-6}
& & &VarianceThreshold &1 &- \\ \midrule
\multirow{11}{*}{Estimator} &\multirow{11}{*}{5} &\multirow{11}{*}{10} &ExtraTrees &3 &min\_samples\_split, criterion\_entropy, criterion\_gini \\ \cmidrule{4-6}
& & &Gradient Boosting &4 & \makecell[l]{learning\_rate, max\_depth, max\_features\_None,\\ max\_features\_log2 } \\ \cmidrule{4-6}
& & &Logit &5 &C, penalty\_l1, penalty\_l2, sovler\_liblinear, solver\_saga \\ \cmidrule{4-6}
& & &MLP &5 & \makecell[l]{alpha, learning\_rate\_init, learning\_rate\_adaptive, \\ solver\_adam, solver\_sgd } \\ \cmidrule{4-6}
& & &Random Forest &3 &min\_samples\_split, criterion\_entropy, criterion\_gini \\ \cmidrule{4-6}
& & &lSVM &1 &C \\ \cmidrule{4-6}
& & &kNN &2 &n\_neighbors, p \\ \cmidrule{4-6}
& & &Decision Trees &1 &min\_samples\_split \\ \cmidrule{4-6}
& & &AdaBoost &2 &learning\_rate, n\_estimators \\ \cmidrule{4-6}
& & &GaussianNB &1 &- \\ \cmidrule{4-6}
& & &Perceptron &1 &- \\
\bottomrule
\end{tabular}
\end{table*}

\begin{table*}[!htp]\centering
\caption{Search Space for ZAP Meta-Dataset}\label{tab:search_space_zap}
\scriptsize
\begin{tabular}{lllllll}\toprule
\textbf{Stage} &\textbf{$L_i$} &\textbf{$M_i$} &\textbf{Algorithm} &\textbf{$|\Lambda_{i,j}|$} &\textbf{Hyperparameters} \\\midrule

\multirow{4}{*}{Architecture} &\multirow{4}{*}{1} &\multirow{4}{*}{4} &ResNet &1 & IsActive\\ \cmidrule{4-6}
& & &EfficientNet-B0 &1 & IsActive \\ \cmidrule{4-6}
& & &EfficientNet-B1 &1 & IsActive \\ \cmidrule{4-6}
& & &EfficientNet-B2 &1 & IsActive \\ \midrule
\multirow{1}{*}{Common Hyperparameters} &\multirow{1}{*}{31} &\multirow{1}{*}{1} & - &31 &  \makecell[l]{early\_epoch, first\_simple\_model, \\ max\_inner\_loop\_ratio, \\ skip\_valid\_score\_threshold, test\_after\_at\_least\_seconds, \\ test\_after\_at\_least\_seconds\_max, \\ test\_after\_at\_least\_seconds\_step,\\ 
batch\_size, cv\_valid\_ratio, max\_size, \\ max\_valid\_count, steps\_per\_epoch,  \\ train\_info\_sample, \\ optimizer.amsgrad, optimizer.freeze\_portion, optimizer.lr, \\ optimizer.min\_lr, optimizer.momentum, optimizer.nesterov, \\ optimizer.warm\_up\_epoch, \\ warmup\_multiplier, optimizer.wd, \\ simple\_model\_LR, simple\_model\_NuSVC, simple\_model\_RF, \\ simple\_model\_SVC, optimizer.scheduler\_cosine, \\ optimizer.scheduler\_plateau, \\ optimizer.type\_Adam, \\ optimizer.type\_AdamW } \\
\bottomrule
\end{tabular}
\end{table*}

\newpage
\section{Meta-Dataset Splits}
\label{appendix:meta_dataset_splits}

We specify the IDs of the task used per split. The ID of the tasks are taken from the original meta-dataset creators.

\textbf{(i) PMF Meta-Dataset} 

\textbf{Meta-training:} 4538, 824, 1544, 1082, 1126, 917, 1153, 1063, 722, 1145, 1106, 1454, 4340, 477, 938, 806, 866, 333, 995, 1125, 924, 298, 755, 336, 820, 1471, 1120, 1520, 1569, 829, 958, 997, 472, 1442, 1122, 868, 313, 928, 921, 1446, 1536, 1025, 4534, 480, 723, 835, 1081, 950, 300, 1162, 821, 469, 933, 343, 766, 936, 1568, 785, 31, 164, 395, 761, 1534, 1056, 685, 1459, 230, 867, 828, 161, 742, 1136, 385, 877, 11, 1066, 1532, 1533, 941, 468, 1542, 795, 329, 792, 782, 1131, 796, 4153, 448, 1508, 1065, 1046, 1014, 54, 780, 748, 1150, 793, 1441, 1531, 717, 819, 1151, 287, 1016, 4135, 874, 162, 1148, 1005, 956, 1528, 23, 1516, 446, 1567, 41, 729, 910, 1156, 32, 1041, 1501, 955, 1129, 827, 937, 180, 1038, 973, 36, 44, 1496, 855, 400, 754, 1557, 1413, 758, 817, 1563, 181, 1127, 43, 444, 277, 1141, 715, 725, 884, 790, 880, 853, 155, 223, 1529, 1535, 6, 1009, 744, 1107, 1158, 830, 859, 947, 1475, 813, 734, 976, 227, 1137, 762, 777, 751, 784, 886, 885, 843, 1055, 1486, 1237, 225, 39, 778, 721, 392, 312, 857, 457, 1450, 209, 779, 479, 718, 801, 770, 1049, 391, 12, 730, 759, 1013, 338, 719, 988, 974, 787, 60, 741, 865, 1050, 735, 1079, 1482, 1143, 954, 1020, 1236, 814, 1048, 892, 879, 745, 971, 913, 1152, 694, 1133, 765, 905, 804, 848, 40477, 846, 334, 791, 923, 377, 1530, 889, 1163, 1006, 749, 922, 10, 59, 1541, 310, 461, 1538, 398, 870, 1481, 970, 1036, 1044, 1068, 187, 476, 1157, 40478, 1124, 1045, 845, 62, 915, 1167, 1059, 458, 815, 28, 797, 462, 21, 952, 467, 1505, 375, 882, 1011, 1460, 964, 1104, 275, 732, 189, 478, 1464, 979, 40474, 772, 720, 1022, 823, 811, 463, 61, 1451, 1067, 1165, 184, 716, 962, 978, 916, 1217, 935, 900, 925, 919, 871, 808, 335, 1457, 799, 983, 1169, 1004, 837, 1507, 4134, 890, 1062, 1510, 818, 728, 1135, 1147, 1019, 450, 1561, 40476, 816, 1562, 740, 864, 942, 151, 713, 953, 737, 1115, 1123, 1545, 1498, 850, 873, 959, 951, 987, 991, 1132, 1154, 294, 1040, 894, 26, 878, 307, 881, 746, 679, 872, 863, 943, 18, 1537, 767, 794, 1121, 1448, 401, 14, 1026, 833, 875, 1488, 383, 914, 20, 1043, 1116, 292, 847, 1540, 1069, 1155, 1015, 1238, 1149, 1546, 841, 1565, 1556, 1527, 682, 465, 1144, 769, 1517, 756, 834, 912, 807, 904, 16, 1061, 386, 805, 3, 775, 464, 50, 1455, 1021, 1160, 1140, 1489, 1519, 946, 994, 46, 22, 1443, 339, 969, 965, 30, 977, 860, 1500, 1064, 776, 822, 182, 743, 934, 1060, 803, 980, 1539, 346, 788, 1444, 1467, 727, 1509, 903, 832. 

\textbf{Meta-Test:} 906, 789, 1159, 1600, 48, 1453, 876, 929, 1012, 891, 1164, 726, 459, 37, 812, 909, 927, 774, 278, 279, 1054, 918, 763, 394, 948, 40, 1100, 736, 1503, 1071, 1512, 1483, 53, 869, 285, 773, 1518, 197, 926, 836, 826, 907, 920, 1080, 1412, 276, 764, 945, 1543, 1472, 996, 908, 896, 851, 397, 783, 1084, 731, 888, 733, 1473, 753, 683, 893, 825, 902, 750, 1078, 8, 1073, 1077, 475, 724, 1513, 384, 388, 887, 714, 771, 1117, 1487, 337, 1447, 862, 838, 949, 800, 931, 911. 

\textbf{Meta-Validation:} 1075, 747, 901, 1452, 389, 387, 752, 932, 768, 40475, 849, 1564, 1449, 895, 183.

\textbf{(ii) TensorOBOE Meta-Dataset}

\textbf{Meta-Training} 210,  20, 491, 339,  14, 170, 483, 284, 543, 220, 493,  64, 524,
       485, 120,  81, 495, 362, 243, 545, 538, 532, 160, 541, 238, 436,
       320, 272, 497, 412,  51, 195, 191, 116, 345, 400, 164, 106, 376,
        63, 105, 308, 523, 490, 319,  93, 468, 517, 198, 145, 150,  39,
       502, 364, 253, 303, 471,   2, 221, 518, 146, 241, 457, 114, 372,
       176, 168, 536, 350, 338, 136, 416, 254, 337, 311, 464, 424, 255,
       232, 133,  33,  88, 290,  44,  61, 199, 492, 529, 500, 343, 218,
       302, 297,  73, 295,  35, 344,  29, 432, 410, 417, 309, 527, 217,
        27, 402, 351, 156, 403, 414, 138, 212, 104, 438, 415, 421, 215, 466, 189, 214, 508, 204, 234,
       259,  67,  24, 216, 300, 223, 129, 458, 111, 166, 505, 477,  40,
       274, 427,  79, 375, 380, 327,  13, 287, 326, 496, 251, 228, 420,
       161,  83, 117,  25, 110, 149, 152,  16, 407, 331, 109, 441, 422,
       139, 237, 260, 352, 428, 317, 323, 484, 248, 449, 467,  19, 328,
       296, 454, 269, 363, 226, 465,   3, 542, 125, 280, 286,  77, 184,
       371, 455, 540, 275, 294, 521, 182,  32,  80, 307, 258,  11, 360,
       447,  86, 266,  36, 193,  58,  41, 270, 411,  50, 209, 481, 480,
       504, 503, 123, 222, 419,  62, 456, 377, 130, 187,  23, 451, 479,  43, 370, 394,   0, 383, 201,
       405, 368, 515,  98, 387, 349, 304, 418, 292, 178, 369, 256,  94,
       197,  95, 535, 163, 169,  69, 305,  48, 341, 373, 397, 207, 279,
       514, 227, 148, 143, 334, 180, 356, 460, 131, 127,  47, 452, 262,
       324, 203,  84, 426, 121, 544, 520, 534, 398, 384,  91,  82, 430,
       267, 119, 358, 291,  57, 425, 487, 321, 257, 442,  42, 388, 335,
       273, 488,  53, 522, 128,  28, 183, 459, 510, 151, 244, 265, 288,
       423, 147, 177,  99, 448, 431, 115,  72, 537, 174,  87, 486, 314,
       396, 472,  70, 277,   9, 359, 192

\textbf{Meta-Test} 118, 159, 548, 453, 385,  31, 512, 353, 247, 179, 332, 379,  10,
       489, 112, 293, 219, 395, 281,  65, 409, 126, 401, 526, 342, 346,
       413, 137, 366,   7, 381, 506, 289, 539, 282, 101,  97, 278,  54,
        30, 298,  49, 100, 474, 461, 322, 283,  56, 144,  60,   6,   8,
       507, 310, 336, 225, 261,  38, 329, 365, 445, 429, 513, 188, 469,
       124, 154, 340,  59, 312, 473, 498, 546, 528, 263, 194,  55, 171,
       236, 206, 158, 196,  34, 408,  18, 501, 250, 533,  52,  74,  26,
       173,  92, 167,   4, 382, 181, 208, 354, 249, 450,   5, 141, 525,
       200, 135, 531, 122,  22,  68

\textbf{Meta-Validation} 85, 446,  96, 172, 134,  37, 392,  90, 509, 389, 378, 435,  66,
       391, 530, 333, 462, 231, 330, 301, 325, 268, 434, 318, 233, 213,
       549, 140, 264, 482, 155, 235, 175, 157, 113, 165, 245, 246,  15,
       361, 547, 470,  17, 306, 190, 153, 357,  45, 443, 162, 475, 186,
       224, 494, 393, 399, 444, 550, 439, 516, 433, 230, 108,  89, 406,
        46, 102, 463,  21, 107, 374, 211, 103,  71,  75, 316,  78, 240,
       205, 386, 202, 142, 313, 252, 348, 511, 437, 347, 478, 355, 476,
       242, 276, 519, 499, 285, 271, 229,   1, 390,  12, 132, 299, 404,
       440, 239, 185,  76, 367, 315

\textbf{(iii) ZAP Meta-Dataset}

\textbf{Meta-Train} 0-svhn\_cropped, \\
svhn\_cropped,
cycle\_gan\_apple2orange,
cats\_vs\_dogs,
stanford\_dogs,
cifar100,
coil100,
omniglot,
cars196,
cars196,
horses\_or\_humans,
tf\_flowers,
cycle\_gan\_maps, 
rock\_paper\_scissors,
cassava,
cmaterdb\_devanagari,
cycle\_gan\_vangogh2photo, \\
cycle\_gan\_ukiyoe2photo,
cifar10,
cmaterdb\_bangla,\\
cycle\_gan\_iphone2dslr\_flower,
emnist\_mnist,
eurosat\_rgb,
colorectal\_histology,
cmaterdb\_telugu,
uc\_merced,
kmnist,

\textbf{Meta-Test} 0-cycle\_gan\_summer2winter\_yosemite, \\
cycle\_gan\_summer2winter\_yosemite,
malaria,
cycle\_gan\_facades,
emnist\_balanced,
imagenette,
mnist,
cycle\_gan\_horse2zebra

\textbf{Meta-Validation} 
emnist\_byclass,
imagenet\_resized\_32x32,
fashion\_mnist

\textbf{(iv) OpenML Datasets} 

10101, 12, 146195, 146212, 146606, 146818, 146821, 146822, 146825, 14965, 167119, 167120, 168329, 168330, 168331, 168332, 168335, 168337, 168338, 168868, 168908, 168909, 168910, 168911, 168912, 189354, 189355, 189356, 3, 31, 34539, 3917, 3945, 53, 7592, 7593, 9952, 9977, 9981

We checked that there is not overlap between the tasks used for meta-training from the TensorOBOE and the tasks used on OpenML Datasets.

\end{document}